\newcommand{\Var}{\mathrm{Var}}
\newtheorem{lem}{Lemma}
\newtheorem{prop}{Proposition}
\newtheorem{thm}{Theorem}
\newcommand{\bX}{{X}}
\newcommand{\bY}{{Y}}
\newcommand{\tbY}{\tilde{Y}}
\newcommand{\PtbY}{\Pi(\tilde{Y})}
\newcommand{\bZ}{{Z}}
\newcommand{\Oo}{\Omega_{{\rm obs}}}
\newcommand{\POo}{\mathcal{P}_{\Oo}}
\newcommand{\POoc}{\mathcal{P}_{\Oo^{\perp}}}
\newcommand{\PO}{\mathcal{P}_{\Omega}}
\newcommand{\POc}{\mathcal{P}_{\Omega^{\perp}}}
\newcommand{\PG}{\mathcal{P}_{\Gamma}}
\newcommand{\PGc}{\mathcal{P}_{\Gamma^{\perp}}}
\newcommand{\PT}{\mathcal{P}_{T}}
\newcommand{\PTp}{\mathcal{P}_{T^{\perp}}}
\newcommand{\PP}{\mathcal{P}_{\Psi}}
\newcommand{\PPp}{\mathcal{P}_{\Psi^{\perp}}}
\newcommand{\PA}{\POo}
\newcommand{\PAc}{\POoc}
\newcommand{\pz}{p_0}
\newcommand{\PQ}{\PGc Q}
\newcommand{\HP}{H^{\Psi}}
\newcommand{\HPp}{H^{\Psi^{\perp}}}
\newcommand{\MP}{M^{\Psi}}
\newcommand{\bigO}{\ensuremath{\mathop{}\mathopen{}\mathcal{O}\mathopen{}}}
\newcommand{\argmin}{\operatornamewithlimits{arg\ min}}
\newcommand{\softimpute}{{\sc Soft-Impute}}
\newcommand{\robustimpute}{{\sc Robust-Impute}}
\newcommand{\ignore}[1]{}
\begin{document}
\title{\LARGE\bf Matrix Completion with Noisy Entries and Outliers}
\author[1]{Raymond K. W. Wong\thanks{\noindent Department of Statistics, Texas A\&M University, College Station, TX 77843, U.S.A. Email: {\tt raywong@stat.tamu.edu}}}
\author[2]{Thomas C. M. Lee\thanks{\noindent Corresponding author.  Department of Statistics, University of California, Davis, One Shields Avenue, Davis, CA 95616, U.S.A. Email: {\tt tcmlee@ucdavis.edu}}}
\affil[1]{Department of Statistics, Texas A\&M University}
\affil[2]{Department of Statistics, University of California, Davis}
\date{September 25, 2017}

\maketitle

\begin{abstract}
This paper considers the problem of matrix completion when the observed entries are noisy and contain outliers.  It begins with introducing a new optimization criterion for which the recovered matrix is defined as its solution.  This criterion uses the celebrated Huber function from the robust statistics literature to downweigh the effects of outliers.  A practical algorithm is developed to solve the optimization involved.  This algorithm is fast, straightforward to implement, and monotonic convergent.  Furthermore, the proposed methodology is theoretically shown to be stable in a well defined sense.  Its promising empirical performance is demonstrated via a sequence of simulation experiments, including image inpainting.

\vspace*{0.5cm}

\noindent
{\it Keywords: ES-Algorithm, Huber function, robust methods, Soft-Impute, stable recovery}

\noindent
{\it Running title: Matrix Completion with Noises and Outliers}

\end{abstract}

\section{Introduction}
The goal of matrix completion is to impute those missing entries of a large matrix based on the knowledge of its relatively few observed entries.  It has many practical applications, ranging from collaborative filtering \citep{Rennie-Srebro05} to computer visions \citep{Weinberger-Saul06} to positioning \citep{Montanari-Oh10}.
In addition, its application to recommender systems is perhaps the most well known example, widely made popularized by the so-called Netflix prize problem \citep{Bennett-Lanning07}.  In this problem a large matrix of movie ratings is partially observed.  Each row of this matrix consists of ratings from a particular customer while each column records the ratings on a particular movie.  In the Netflix dataset, there are around $5\times10^5$ customers and $2\times 10^4$ movies, with less than $1\%$ of the ratings are observed.  Without any prior knowledge, a reasonable full recovery of the matrix is virtually impossible.  To overcome this issue, it is common to assume that the matrix is of low rank, reflecting the belief that the users' ratings are based on a relatively small number of factors.  This low rank assumption is very sensible in many applications, although the resulting optimizations are combinatorially hard \citep{Srebro-Jaakkola03}.  To this end, various convex relaxations and related optimization algorithms have been proposed to provide computationally feasible solutions; see, e.g., \citet{Candes-Recht09, Candes-Plan10, Keshavan-Montanari-Oh10a, Keshavan-Montanari-Oh10, Mazumder-Hastie-Tibshirani10, Marjanovic-Solo12} and \citet{Hastie-et-al14}.

In addition to computational advances, the theoretical properties of matrix completion using nuclear norm minimization have also been well studied.  For example, when the observed entries are noiseless, \citet{Candes-Recht09} show that perfect recovery of a low rank matrix is possible; see also \citet{Keshavan-Montanari-Oh10a}, \citet{Gross11} and \citet{Recht11}.  This result of \citet{Candes-Recht09} has been extended to noisy measurements by \citet{Candes-Plan10}: with high probability, the recovery is subject to an error bound proportional to the noise level.  Techniques that achieve this desirable property are often referred as {\em stable}.  See also \citet{Keshavan-Montanari-Oh10} and \citet{Koltchinskii-Lounici-Tsybakov11} for other theoretical developments of matrix completion from noisy measurements.

The original formulation of matrix completion 
assumes those observed entries are noiseless, and is later extended to the more realistic situation where the entries are observed with noise.
This paper further extend the formulation to simultaneously allow for both noisy entries and outliers.  To the authors knowledge, such an extension has not been considered before, although similar work exists.  In \citet{Candes-Li-Ma11} a method called principal component pursuit (PCP) is developed to recover a matrix observed with mostly noiseless entries and otherwise a small amount of outliers.  This is done by modeling the observed matrix as a sum of a low rank matrix and a sparse matrix.
\citet{Zhou-Li-Wright10} extend this PCP method to noisy entries but assumes the matrix is fully observed, thus it does not fall into the class of matrix completion problems.  Lastly \citet{Chen-Xu-Caramanis11} extend PCP to safeguard against special outlying structures, namely outlying columns.  However, it works only on outliers and otherwise noiseless entries.
Due to the similarity between the matrix completion and principal component analysis, it is worthmentioning that there are some related work \citep{Karhunen11, Luttinen-Ilin-Karhunen12} on robust principal component analysis with missing values.

The primary contribution of this paper is the development of a new robust matrix completion method that can be applied to recover a matrix with missing, noisy and/or outlying entries.  This method is shown to be stable in the sense of \citet{Candes-Plan10}, as discussed above.  As opposed to the above referenced PCP approach that decomposes the matrix into a sum of a low rank and a sparse matrix, the new approach is motivated by the statistical literature of robust estimation which modifies the least squares criterion to downweigh the effects of outliers.  Particularly, we make use of the Huber function for this modification.  We provide a theoretical result that establishes an intrinsic link between the two different approaches.  To cope with the nonlinearity introduced by the Huber function, we propose a fast, simple, and easy-to-implement algorithm to perform the resulting nonlinear optimization problem.  This algorithm is motivated by the ES-Algorithm for robust nonparametric smoothing \citep{Oh-Nychka-Lee07}.  As to be shown below, it can transform a rich class of (non-robust) matrix completion algorithms into algorithms for robust matrix completion.

The rest of this paper is organized as follows.  Section~\ref{sec:robust} provides further background of matrix completion and proposes a new optimization criterion for robust matrix recovery.  Fast algorithms are developed in Section~\ref{sec:alg} for practically computing the robust matrix estimate.  Theoretical and empirical properties of the proposed methodology are studied in Section~\ref{sec:thm} and Section~\ref{sec:sim} respectively.  Concluding remarks are given in Section~\ref{sec:con}, while technical details are relegated to the appendix.

\section{Matrix Completion with Noisy Observations and Outliers}\label{sec:robust}
Suppose $\bX$ is an $n_1\times n_2$ matrix which is observed for only a subset of entries ${\Oo}\subseteq [n_1]\times[n_2]$, where $[n]$ denotes $\{1,\dots,n\}$.  Let $\Oo^{\perp}$ be the complement of $\Oo$.  Define the projection operator $\mathcal{P}_{\Oo}$ as $\mathcal{P}_{\Oo} {B} = {C}$, where ${C}_{ij}={B}_{ij}$ if $(i,j)\in\Oo$ and ${C}_{ij}=0$ if $(i,j) \not\in \Oo$, for any $n_1\times n_2$ matrix $B=(B_{ij})_{i\in[n_1], j\in[n_2]}$.  The following is a standard formulation for matrix completion using a low rank assumption:
\begin{align*}
  \underset{{Y}}{\mbox{minimize}} \quad &\mathrm{rank}({Y})\\
\mbox{subject to}\quad
& \frac{1}{2}\|\mathcal{P}_{\Oo} {X} - \mathcal{P}_{\Oo} {Y}\|^2_F \le e, 
\end{align*}
where $e>0$ and $\|\cdot\|_F$ is the Frobenius norm.  Carrying out this rank minimization enables a good recovery of any low rank matrix with missing entries.  Note that for the reason of accommodating noisy measurements, the constraint above allows for a slight discrepancy between the recovered and the observed matrices.

However, this minimization is combinatorially hard \citep[e.g.,][]{Srebro-Jaakkola03}.  To achieve fast computation, the following convex relaxation is often used:
\begin{align*}
  \underset{{Y}}{\mbox{minimize}} \quad &\|{Y}\|_{*}\\
\mbox{subject to}\quad
& \frac{1}{2}\|\mathcal{P}_{\Oo} {X} - \mathcal{P}_{\Oo} {Y}\|^2_F \le e,
\end{align*}
where $\| {Y} \|_{*}$ represents the nuclear norm of ${Y}$ (i.e., the sum of singular values of ${Y}$). The Lagrangian form of this optimization is
\begin{equation}
  \underset{{Y}}{\mbox{minimize}} \quad
 f(\bY|\bX)\equiv\frac{1}{2}\|\mathcal{P}_{\Oo} {X} - \mathcal{P}_{\Oo} {Y}\|^2_F +
 \gamma\|{Y}\|_{*},\label{eqn:nonrobust}
\end{equation}
where $\gamma >0 $ has a one-to-one correspondence to $e$.  The squared loss in the first term is used to measure the fitness of the recovered matrix to the observed matrix.  It is widely known that such a squared loss is very sensitive to outliers and often leads to unsatisfactory recovery results if such outliers exist.  Motivated by the literature of robust statistics \citep[e.g.,][]{Huber-Ronchetti11}, we propose replacing this squared loss by the Huber loss function
\[
  \rho_c(x) = 
  \begin{cases}
    x^2, & |x| \le c\\
    c(2|x| - c), & |x|>c
\end{cases},
\]
with tuning parameter $c$.
When comparing with the squared loss, the Huber loss downweighs the effects of extreme measurements.  Our proposed solution for robust matrix completion is given by the following minimization:
\begin{equation}
  \underset{{Y}}{\mbox{minimize}} \quad
g(\bY)\equiv\frac{1}{2}\sum_{(i,j)\in\Oo} \rho_c( X_{ij}-Y_{ij} )
 +
 \gamma\|{Y}\|_{*}.\label{eqn:goal}
\end{equation}
Note that the convexity of $\rho_c$ guarantees the convexity of the objective criterion~(\ref{eqn:goal}).

For many robust statistical estimation problems the tuning parameter $c$ is pre-set as $c=1.345\hat{\sigma}$ to achieve a 95\% statistical efficiency, where $\hat{\sigma}$ is an estimate of the standard deviation of the noise.  For the current problem, however, the choice of $c$ is suggested by Theorem~\ref{thm:stable} below: $c=\gamma/\sqrt{n_{(1)}p}$, where $n_{(1)}=\max\{n_1, n_2\}$ and $p$ is the percentage of missing entries.  This choice of $c$ was used throughout all our numerical work.

\section{Fast Algorithms for Minimization of~(\protect\ref{eqn:goal})}
\label{sec:alg}

Since the gradient of the Huber function is non-linear, (\ref{eqn:goal}) is a harder optimization problem when comparing to many typical matrix completion formulations such as~(\ref{eqn:nonrobust}).  As an example, consider~(\ref{eqn:nonrobust}) when $\bX$ is fully observed; i.e., $\Oo=[n_1]\times[n_2]$.  Through sub-gradient analysis \citep[e.g.,][]{Cai-Candes-Shen10, Ma-Goldfarb-Chen11}, one can derive a closed-form solution to (\ref{eqn:nonrobust}), denoted as $S_\gamma(X)$, where $S_\gamma$ is the soft-thresholding operator defined in \citet{Mazumder-Hastie-Tibshirani10}, also given in~(\ref{eqn:Sgamma}) below.  However, even if $X$ was fully observed, (\ref{eqn:goal}) does not have a closed-form solution.  The goal of this section is to develop fast methods for minimizing~(\ref{eqn:goal}).

\subsection{A General Algorithm}
In \citet{Oh-Nychka-Lee07} a method based on the so-called theoretical construct {\em pseudo data} is proposed for robust wavelet regression.  The idea is to transform a Huber-type minimization problem into a sequence of fast and well understood squared loss minimization problems.  This subsection modifies this idea and proposes an algorithm to minimizing~(\ref{eqn:goal}).

As similar to \citet{Oh-Nychka-Lee07}, we define a {\em pseudo data matrix} as
\begin{equation}
{Z} = \PA \tilde{Y} + \frac{1}{2} \psi_{c}({E}),
\label{eqn:pseudomatrix}
\end{equation}
where $\tilde{Y}$ is the current estimate of the target matrix, ${E} = \PA {X} - \PA \tilde{Y}$ is the ``residual matrix'', and $\psi_c=\rho_c'$ is the derivative of $\rho_c$. With a slight notation abuse, when $\psi_c$ is applied to a matrix, it means $\psi_c$ is evaluated in an element-wise fashion.  Straightforward algebra shows that the sub-gradient of $f(Y|Z)$ (with respect to $Y$) evaluated at $\tilde{Y}$,
\begin{equation}
-(\POo Z -\POo \tilde{Y}) + \gamma\partial \| \tilde{Y}\|_*, \label{eqn:sub1}
\end{equation}
is equivalent to the sub-gradient of $g(Y)$ (with respect to $Y$) evaluated at $\tilde{Y}$,
\begin{equation}
  -\frac{1}{2}\psi_c(\POo X -\POo \tilde{Y}) + \gamma \partial \| \tilde{Y}\|_*.\label{eqn:sub2}
\end{equation}
The proposed algorithm iteratively updates $\tilde{Y}=\argmin_Y f(Y|Z)$ and $Z$ using~(\ref{eqn:pseudomatrix}).  Upon convergence (implied by Proposition~\ref{prop:mono} below), the sub-gradient~(\ref{eqn:sub1}) contains 0 at the converged $\tilde{Y}$ and thus the sub-gradient~(\ref{eqn:sub2}) also contains 0 at this converged $\tilde{Y}$.  Therefore this $\tilde{Y}$ is the solution to~(\ref{eqn:goal}).  Details of this algorithm based on pseudo data matrix are given in Algorithm~\ref{alg:general}.

\begin{algorithm}
  \caption{The General Robust Algorithm}\label{alg:general}
  \begin{algorithmic}[1]
    \State Perform (non-robust) matrix completion on $X$ and assign
    $Y^{{\rm old}} \leftarrow \argmin_Y f(Y|X)$.  This $Y^{{\rm old}}$ is the initial estimate
    (starting point of the algorithm).
    \State Repeat:
      \begin{enumerate}[(a)]
        \item Compute ${E} \leftarrow \PA {X} - \PA {Y}^{{\rm old}}$.
        \item Compute ${Z} \leftarrow \PA {Y}^{{\rm old}} +
            \frac{1}{2} \psi_{c}({E})$.
          \item Perform (non-robust) matrix completion on $Z$ and assign
            ${Y^{{\rm new}}} \leftarrow \argmin_Y f(Y|Z)$.
\item If
  \[
    \frac{\|{Y}^{{\rm new}} - {Y}^{{\rm old}}\|_F^2}{\|{Y}^{{\rm old}}\|_F^2} < \varepsilon,
  \]
  exit.
\item Assign ${Y}^{{\rm old}} \leftarrow {Y}^{{\rm new}}$.
  \end{enumerate}
  \State Output $Y^{{\rm new}}$.
\end{algorithmic}
\end{algorithm}

Algorithm~\ref{alg:general} has several attractive properties.  First, it can be paired with any existing (non-robust) matrix completion algorithm (or software), as can be easily seen in Step~2(c).  This is a huge advantage, as a rich body of existing (non-robust) methods can be made robust against outliers.
Second, once such an (non-robust) algorithm is available, the rest of the implementation is straightforward and simple, and no expensive matrix operations are required.  Lastly, it has strong theoretical backup, as to be reported in Section~\ref{sec:thm}.

\subsection{Further Integration with Existing Matrix Completion Algorithms}
Many existing matrix completion algorithms are iterative.  A direct application of Algorithm~\ref{alg:general} would lead to an algorithm that is iterations-within-iterations.  Although our extensive numerical experience suggests that these direct implementations would typically converge within a few iterations to give a reasonably fast execution time, it would still be advantageous to speed up the overall procedure.  Here we show that it is possible to further improve the speed of the overall robust algorithm by embedding the pseudo data matrix idea directly into a non-robust algorithm.

We shall illustrate this with the \softimpute\ algorithm proposed by \citet{Mazumder-Hastie-Tibshirani10}.  To proceed we first recall the definition of their thresholding operator $S_\gamma$: for any matrix ${Z}$ of rank $r$, 
\begin{equation}
S_\gamma({Z})={U} {D}_\gamma {V}^\intercal, 
\label{eqn:Sgamma}
\end{equation}
where ${Z}={U}{D}{V}^\intercal$ is the singular value decomposition of ${Z}$, ${D} = \mathrm{diag}[d_1,\dots,d_r]$ and ${D}_{\gamma} = \mathrm{diag}[(d_1-\gamma)_{+},\dots,(d_r-\gamma)_{+}]$.  Now the main idea is to suitably replace an iterative matrix estimate with the pseudo data matrix estimate given by~(\ref{eqn:pseudomatrix}).  With \softimpute, the resulting robust algorithm is given in Algorithm~\ref{alg:essoft}.  We shall call this algorithm \robustimpute.  As to be shown by the numerical studies below, \robustimpute\ is very fast and produces very promising empirical results.
Our algorithm also has the sparse-plus-low-rank structure in the singular
  value thresholding step (Step 2a(iii)).
  This linear algebra structure has positive impact on the computational complexity.
  See Section 5 of \citet{Mazumder-Hastie-Tibshirani10} for details.
Moreover, the monotonicity and convergence of our algorithm
is guaranteed by Proposition~\ref{prop:mono}
and Theorem~\ref{thm:global}.

\begin{algorithm}
  \caption{\robustimpute}\label{alg:essoft}
  \begin{algorithmic}[1]
    \State Initialize ${{Y}}^{{\rm old}}=S_{\gamma_1}(\PA {X})$ and ${Z}={X}$.
  \State Do for $\gamma_1>\gamma_2>\dots > \gamma_K$:
  \begin{enumerate}[(a)]
    \item Repeat:
      \begin{enumerate}[(i)]
        \item Compute ${E} \leftarrow \PA {X} - \PA {Y}^{{\rm old}}$.
        \item Compute ${Z} \leftarrow \PA {Y}^{{\rm old}} +
            \frac{1}{2} \psi_{c}({E})$
        \item Compute ${Y}^{{\rm new}} \leftarrow S_{\gamma_k}(\PA {Z} +
  \PAc{Y}^{{\rm old}})$.
\item If
  \[
    \frac{\|{Y}^{{\rm new}} - {Y}^{{\rm old}}\|_F^2}{\|{Y}^{{\rm old}}\|_F^2} < \varepsilon,
  \]
  exit.
\item Assign ${Y}^{{\rm old}} \leftarrow {Y}^{{\rm new}}$.
  \end{enumerate}
\item Assign $\hat{{Y}}_{\gamma_k} \leftarrow {Y}^{{\rm new}}$.
  \end{enumerate}
  \State Output the sequence of solutions $\hat{{Y}}_{\gamma_1},\dots,\hat{{Y}}_{\gamma_K}$.
\end{algorithmic}
\end{algorithm}

\section{Theoretical Properties}\label{sec:thm}
This section presents some theoretical backups for the proposed methodology.

\subsection{Monotonicity and global convergence}
We first present the following proposition concerning the monotonicity of the algorithms.  The proof can be found in~Appendix~\ref{app:proofp1}.
We also provide an alternative proof suggested by a referee, based on the idea of alternating minimization, in Appendix~\ref{app:proofp1}

\begin{prop}[Monotonicity]\label{prop:mono}
Let ${{Y}}^{(k)}$ and ${Z}^{(k)}=\POo Y^{(k-1)} + \psi_c(\POo X-\POo Y^{(k-1)})/2$ be, respectively, the estimate and the pseudo data matrix in the $k$-th iteration. If $Y^{(k+1)}$ is the next estimate such that $f(\bY^{(k+1)}|\bZ^{(k+1)}) \le f(\bY^{(k)}|\bZ^{(k+1)})$, then $g(\bY^{(k+1)}) \le g(\bY^{(k)})$.
\end{prop}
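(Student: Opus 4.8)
The plan is to read Proposition~\ref{prop:mono} as a majorization--minimization statement: the squared-loss surrogate $f(\,\cdot\,|Z^{(k+1)})$, with the pseudo data matrix $Z^{(k+1)}=\POo Y^{(k)}+\tfrac12\psi_c(\POo X-\POo Y^{(k)})$ built from $Y^{(k)}$, majorizes the Huber objective $g$ up to an additive constant, and the two agree at $Y^{(k)}$. Since $f(\,\cdot\,|Z)$ and $g$ carry the identical penalty $\gamma\|Y\|_*$, I would first observe that the difference $f(Y|Z^{(k+1)})-g(Y)$ depends on $Y$ only through its entries on $\Oo$ and splits entrywise as $\sum_{(i,j)\in\Oo}\phi_{ij}(Y_{ij})$, where for a single observed entry with observation $x$ and current value $\tilde y=Y^{(k)}_{ij}$,
\[
\phi(y)=\tfrac12\bigl(z-y\bigr)^2-\tfrac12\rho_c(x-y),\qquad z=\tilde y+\tfrac12\psi_c(x-\tilde y).
\]

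The heart of the argument is to show each $\phi$ is minimized at $y=\tilde y$. Differentiating gives $\phi'(y)=(y-z)+\tfrac12\psi_c(x-y)$, and substituting the definition of $z$ yields $\phi'(\tilde y)=0$; this is exactly the first-order matching already recorded in the equivalence of the sub-gradients~(\ref{eqn:sub1}) and~(\ref{eqn:sub2}), and it is precisely the factor $1/2$ in the pseudo data~(\ref{eqn:pseudomatrix}) that makes the two gradients agree. For convexity I would use that the Huber function is flatter than $x^2$, so $\psi_c=\rho_c'$ is nondecreasing with slope at most $2$: the map $y\mapsto\tfrac12\psi_c(x-y)$ is then nonincreasing with slope in $[-1,0]$, and adding the unit-slope term $y-z$ makes $\phi'$ nondecreasing, hence $\phi$ convex. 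A convex $\phi$ with a stationary point at $\tilde y$ attains its global minimum there, so $\phi(y)\ge\phi(\tilde y)$ for all $y$.

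Summing over $(i,j)\in\Oo$ then delivers the matrix-level majorization
\[
f(Y|Z^{(k+1)})-g(Y)\;\ge\;f(Y^{(k)}|Z^{(k+1)})-g(Y^{(k)})\qquad\text{for every }Y,
\]
which I would evaluate at $Y=Y^{(k+1)}$ and rearrange to
\[
g(Y^{(k)})-g(Y^{(k+1)})\;\ge\;f(Y^{(k)}|Z^{(k+1)})-f(Y^{(k+1)}|Z^{(k+1)}).
\]
The hypothesis $f(Y^{(k+1)}|Z^{(k+1)})\le f(Y^{(k)}|Z^{(k+1)})$ makes the right-hand side nonnegative, giving $g(Y^{(k+1)})\le g(Y^{(k)})$ as claimed. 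The main obstacle is the per-entry majorization in the second paragraph; everything else is bookkeeping. The one subtlety there is that $\rho_c$ fails to be twice differentiable at $|x-y|=c$, so I would argue convexity through monotonicity of $\phi'$ rather than a pointwise second-derivative inequality, the curvature bound $\rho_c''\le 2$ and the gradient matching at $\tilde y$ being the two facts that do all the work.
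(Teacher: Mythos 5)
Your proposal is correct and is, in substance, the paper's own argument: both proofs rest on the majorization--minimization structure, namely that $f(\,\cdot\,|Z^{(k+1)})$ dominates $g$ up to an additive constant with gradients matching at $Y^{(k)}$ (the factor $1/2$ in the pseudo data), so that descent on the surrogate forces descent on $g$. The only difference is packaging: the paper verifies the per-entry majorization by a Taylor expansion with integral remainder bounded using $\rho_c''\le 2$ almost everywhere, whereas you verify it by showing the difference $\phi$ is convex via monotonicity of $\phi'$ and stationary at $\tilde y$ --- the same curvature and gradient-matching facts expressed in a different (and equally rigorous) form.
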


For the general version (Algorithm~\ref{alg:general}), it is obvious that the condition $f(\bY^{(k+1)}|\bZ^{(k+1)}) \le f(\bY^{(k)}|\bZ^{(k+1)})$ is satisfied as the result of the minimization $Y^{{\rm old}} \leftarrow \argmin_Y f(Y|Z)$.  For the specialized version \robustimpute\ (Algorithm~\ref{alg:essoft}), this condition is implied by Lemma~2 of \citet{Mazumder-Hastie-Tibshirani10}.
Therefore both versions are monotonic.

As pointed out by a referee, 
  the proposed algorithms
  can also be viewed as an instance of the majorization-minimization (MM)
  algorithm \citep{Lange-Hunter-Yang00, Hunter-Lange04}.
  It can be shown that, for $(i,j)\in\Oo$,
  \begin{align*}
    \rho_c(X_{ij}-Y_{ij}) &\le \rho_c(X_{ij}-Y_{ij}^{\rm old}) - (Y_{ij}- Y_{ij}^{\rm old})\psi_c(X_{ij}-Y_{ij}^{\rm old}) + 2 \cdot \frac{1}{2}(Y_{ij}-Y_{ij}^{\rm old})^2\\
    &= \left[ Y_{ij} - Y_{ij}^{\rm old} -\frac{1}{2}\psi_c(X_{ij}-Y_{ij}^{\rm old}) \right]^2 + constant\\
    &= ( Y_{ij} - Z_{ij} )^2 + constant.
  \end{align*}
  Therefore,
  subject to an additive constant that does not depend on
  $Y$,
  $h(Y|Y^{\rm old})=f(Y|Z)=(1/2)\sum_{(i,j)\in\Oo} (Z_{ij}-Y_{ij})^2 + \gamma \|Y\|_*$ is a
  majorization of the objective function $g$.
  With this majorization, Algorithm \ref{alg:general} can be viewed as 
  an MM algorithm.
  Additionally, one can majorize the unobserved entries
  by $(Y_{ij}-Z_{ij})^2=(Y_{ij}-Y_{ij}^{\rm old})^2\ge 0$
  and, together with the above majorization of the observed entries,
 Algorithm \ref{alg:essoft} can also be shown as
  an MM algorithm.
  Therefore the monotonicity of the proposed algorithms
  can also be obtained by the general theory of MM algorithm
  \citep[e.g.,][]{Lange10}.
  Moreover, the explicit connection to the MM algorithm allows
  possible extensions of the current algorithm to other robust loss
  functions such as Tukey's biweight loss.
  However, due to non-differentiability of the objective function,
  the typical convergence analysis of MM algorithm
  \citep[e.g.,][Ch.~15]{Lange10} does not apply to our case.

  We summarize the global convergence rates of both Algorithm \ref{alg:general} and
  Algorithm \ref{alg:essoft} in the following theorem.
   \begin{thm} \label{thm:global}
     Let $\bY^{(k)}$ and $\bY^{(0)}$ be, respectively, the estimate in the $k$-th iteration and
     the starting point of Algorithm \ref{alg:general} or Algorithm \ref{alg:essoft}
     Then for any $k\ge 1$,
     \begin{align*}
       \mbox{Algorithm \ref{alg:general}:} \qquad g(\bY^{(k)}) - g(\bY^{*}) &\le \frac{\| \POo\bY^{(0)} - \POo\bY^{*}\|^2_F}{2k}, \qquad
       \forall Y^*\in \mathcal{Y},\\
       \mbox{Algorithm \ref{alg:essoft}:} \qquad g(\bY^{(k)}) - g(\bY^{*}) &\le \frac{\| \bY^{(0)} - \bY^{*}\|^2_F}{2k}, \qquad
       \forall Y^*\in \mathcal{Y},
     \end{align*}
     where $\mathcal{Y}$ be the set of all global
     minimizers of $g$ (i.e. $\mathcal{Y}=\arg\min_{\bY\in\mathbb{R}^{n_1\times
         n_2}} g(\bY)$).
   \end{thm}
The global convergence analysis of Algorithm \ref{alg:general} can be carried out
similarly as in \citet{Beck-Teboulle09} for proximal gradient method,
despite that Algorithm \ref{alg:general} is not a proximal gradient method.
For completeness, we give the proof of Theorem \ref{thm:global} for Algorithm \ref{alg:general}
in Appendix \ref{app:proofalg}.

  As for \robustimpute
  \ (Algorithm \ref{alg:essoft}),
  we can rewrite it as an instance of the proximal gradient method
  applied to $g(\bY)= g_1(\bY_1) + g_2(\bY_2)$,
  where $g_1(\bY) = (1/2) \sum_{(i,j)\in\Oo} \rho_c(X_{ij} - Y_{ij})$
 and $g_2(\bY)= \gamma \|Y\|_*$.
 In our case, the proximal gradient method with step size $L$ iterates over
 $\bY^{(k+1)}=\xi_L(\bY^{(k)})$ with
 \[
   \xi_L (\tilde{\bY}) =
   \arg\min_{\bY} \left\{ g_2(\bY) +
     \frac{L}{2} \left\|  \bY - \left(  \tilde{\bY} - \frac{1}{L} \nabla
         g_1(\tilde{\bY}) \right) \right\|_F^2
   \right\},
 \]
 where $L$ is constant greater than or equal to the Lipstchiz constant of $g_1$.
 Note that $g_1$ has a Lipschitz contant 1.
 If we take $L=1$, we have the following simplification.
 \begin{align*}
     g_2(\bY) +
     \frac{L}{2} \left\|  \bY - \left(  \tilde{\bY} - \frac{1}{L} \nabla
         g_1(\tilde{\bY}) \right) \right\|_F^2
     &= g_2(\bY) + \frac{1}{2}\left\|
       \bY - \left\{ \tilde{Y} + \frac{1}{2} \psi_c (\POo \bX - \POo
         \tilde{\bY}) \right\} \right\|_F^2\\
     &= g_2(\bY) + \frac{1}{2}\left\| 
       \bY - \left\{ \POoc \tilde{Y} + \POo \bZ \right\} \right\|_F^2.
   \end{align*}
   The minimization of ${\xi}_1$ is equivalent to Step 2a(iii)
   of Algorithm \ref{alg:essoft}.
   Therefore, the proximal gradient method is the same as \robustimpute.
   This connection allows us to apply the convergence results of proximal gradient
   method to \robustimpute \ directly.
   Theorem \ref{thm:global} for Algorithm \ref{alg:essoft} follows from Theorem
   3.1~of \cite{Beck-Teboulle09}.
   Lastly, the Nesterov's method \citep{Nesterov07} can be applied directly to
    accelerate Algorithm 2.
    The resulted accelerated version is expected to be faster in terms of convergence.
    However, the acceleration in the Nesterov's method ruins
    the computationally beneficial
     sparse-plus-low-rank structure \citep{Mazumder-Hastie-Tibshirani10} in the singular vaue thresholding step (Step 2a(iii)).
    Hence, for large matrices, the non-accelerated version is still preferred in terms of overall computations. The detailed discussion can be found in Section 5 of \citet{Mazumder-Hastie-Tibshirani10}.

\subsection{Stable Recovery}
Recall the stable property of \citet{Candes-Plan10} implies that, with high probability, the recovered matrix is subject to an error bound proportional to the noise level.  This subsection shows that the robust matrix completion defined by~(\ref{eqn:goal}) is also stable. 

Although the formulation of~(\ref{eqn:goal}) has its root from classical robust statistics, it is also related to the more recent principal component pursuit (PCP) proposed by \citet{Candes-Li-Ma11}.  PCP assumes that the entries of the observed matrix are noiseless, and that this matrix can be decomposed as the sum of a low rank matrix and a sparse matrix, where the sparse matrix is treated as the gross error.  In \citet{Candes-Li-Ma11} it is shown that using PCP perfect recovery is possible with or without missing entries in the observed matrix.
Another notable work by \citet{Chandrasekaran-Sanghavi-Parrilo11} provide completely deterministic conditions for the PCP to succeed under no missing data.
  See Section 1.5 of \citet{Candes-Li-Ma11} for a detailed comparison between these two pieces of work.
For the case of noisy measurements without missing entries, \citet{Zhou-Li-Wright10} extend PCP to stable PCP (SPCP), which is shown to be stable.  However, to the best of our knowledge, there is no existing theoretical results for the case of noisy (and/or outlying) measurements with missing entries.

Inspired by \citet{She-Owen11}, we first establish an useful link between robust matrix completion~(\ref{eqn:goal}) and PCP in the following proposition.  The proof can be found in Appendix~\ref{app:proofp2}.

\begin{prop}[Equivalence]\label{prop:equiv}
The minimization (\ref{eqn:goal}) is equivalent to
\begin{equation}
  \underset{L, S}{\mathrm{minimize}}\quad\frac{1}{2} \| \PA \bX - \PA
  (L + S)\|_F^2 + \gamma\|L\|_* + c \|S\|_1.
  \label{eqn:goal2}
\end{equation}
That is, the minimizing $\bY$ of~(\ref{eqn:goal}) and the minimizing $L$ of~(\ref{eqn:goal2})
coincide.
\end{prop}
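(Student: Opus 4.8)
The plan is to show that the joint objective in~(\ref{eqn:goal2}), after the sparse component $S$ is profiled out, collapses exactly onto $g$ from~(\ref{eqn:goal}); the equivalence of the two programs — and in particular the coincidence of the optimal $L$ with the optimal $\bY$ — then follows from the elementary fact that partial minimization preserves minimizers.

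The engine of the argument is a pointwise identity linking the Huber loss to an $\ell_1$-penalized quadratic. For a scalar $r$ and the Huber function $\rho_c$ defined above, I claim
\[
\tfrac{1}{2}\rho_c(r) = \min_{s\in\mathbb{R}} \Big\{ \tfrac{1}{2}(r-s)^2 + c\,|s| \Big\},
\]
with minimizer the scalar soft-threshold $s^\star = \mathrm{sign}(r)\,(|r|-c)_+$. First I would verify this by elementary calculus: the right-hand objective is convex in $s$, its subgradient vanishes at $s^\star$, and substituting $s^\star$ back returns $\tfrac{1}{2}r^2$ when $|r|\le c$ and $\tfrac{1}{2}c(2|r|-c)$ when $|r|>c$, matching $\tfrac{1}{2}\rho_c(r)$ in both regimes. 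The only subtle point is the bookkeeping of constants: it is precisely the factor $\tfrac12$ in front of the quadratic together with the definition $\rho_c(x)=c(2|x|-c)$ that force the $\ell_1$ weight to equal $c$ — exactly the constant appearing in~(\ref{eqn:goal2}).

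Next I would lift this identity to matrices by profiling. Fix $L$ in~(\ref{eqn:goal2}) and minimize over $S$. Because $\|S\|_1=\sum_{i,j}|S_{ij}|$ ranges over all entries while the fidelity term depends on $S$ only through $\PA S$, any entry $(i,j)\notin\Oo$ contributes the lone term $c\,|S_{ij}|$ and is therefore set to zero at the optimum; hence the optimal $S$ is supported on $\Oo$. For $(i,j)\in\Oo$ the objective separates across entries, and applying the scalar identity with $r=(\bX-L)_{ij}$ gives minimum value $\tfrac12\rho_c\big((\bX-L)_{ij}\big)$, attained at $S_{ij}=\mathrm{sign}(r)(|r|-c)_+$. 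Summing over $\Oo$, the profiled objective becomes
\[
\tfrac{1}{2}\|\PA \bX - \PA L\|_{\rho,c}^2 + \gamma\|L\|_*,
\]
which is exactly $g(L)$.

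Finally, since $\min_{L,S}(\cdot)=\min_L\big[\min_S(\cdot)\big]=\min_L g(L)$, and partial minimization carries minimizers of the joint program to minimizers of the profiled one (and back, by reattaching the optimal $S$), the minimizing $L$ of~(\ref{eqn:goal2}) is precisely the minimizing $\bY$ of~(\ref{eqn:goal}). I expect the main obstacle to be notational rather than conceptual: keeping the halves and the weight $c$ consistent through the Huber identity, and correctly arguing that the $\ell_1$ term forces the optimal $S$ to vanish off $\Oo$, so that the masked fidelity term and the full $\ell_1$ penalty recombine into the single Huber term.
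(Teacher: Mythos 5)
Your proof is correct, but it takes a genuinely different route from the paper's. The paper argues at the level of first-order optimality conditions: it writes down the subgradient inclusions for both convex programs, uses the condition ${0} \in -\PA(\bX-L-S) + c\,\partial\|S\|_1$ to solve for $S$ entrywise — obtaining exactly your soft-threshold $S_{ij}=\mathrm{sign}(r)(|r|-c)_+$ with $r=X_{ij}-L_{ij}$, recorded as~(\ref{eqn:lem2:eqn1}) — then observes the derivative identity $X_{ij}-L_{ij}-S_{ij}=\rho'_c(X_{ij}-L_{ij})/2$ on $\Oo$ and substitutes it into the stationarity condition for $L$ to recover the stationarity condition~(\ref{eqn:lem2:cond1}) of~(\ref{eqn:goal}). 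You instead work at the level of objective values: your infimal-convolution identity $\tfrac12\rho_c(r)=\min_s\{\tfrac12(r-s)^2+c|s|\}$ (Huber as the Moreau envelope of $c|\cdot|$, with the constants correctly matched to the paper's nonstandard $\rho_c$ whose quadratic zone is $x^2$ rather than $x^2/2$) lets you profile $S$ out of~(\ref{eqn:goal2}) and show that the profiled objective is \emph{identically} equal to $g$, after noting that entries of $S$ off $\Oo$ must vanish. The two arguments hinge on the same soft-thresholding structure — your value identity is the integrated form of the paper's derivative identity — but yours buys something: it needs no subgradient calculus for the nuclear norm, it does not invoke convexity to pass from stationarity to global optimality (partial minimization preserves minimizers for arbitrary functions, and attainment of the inner minimum is explicit), and it yields the two-way correspondence between minimizer sets in one stroke, whereas the paper's KKT comparison delivers the forward direction transparently and leaves the converse to the reader via convexity. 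What the paper's route buys in exchange is the explicit closed form~(\ref{eqn:lem2:eqn1}) for the optimal $S$, which directly exhibits the recovered sparse component as a thresholded residual and thus makes the interpretation of $S$ as the matrix of identified outliers immediate.
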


Minimization~(\ref{eqn:goal2}) has a high degree of similarity to both PCP and SPCP.  It is equivalent to
\begin{align}
  \underset{{L,S}}{\mbox{minimize}} \quad &\|L\|_* + \lambda \|S\|_1 \label{eqn:goal3}\\
\mbox{subject to}\quad
& \|\mathcal{P}_{\Oo} {X} -
\mathcal{P}_{\Oo} (L+S)\|^2_F \le \delta^2, \nonumber
\end{align}
where $\lambda=c/\gamma$ and $\delta>0$ has a one-to-one correspondence to $\gamma$.  When comparing with PCP, (\ref{eqn:goal2}) permits the observed matrix to be different from the recovered matrix ($L+S$) to allow for noisy measurements.  When comparing with SPCP, (\ref{eqn:goal2}) permits missing entries, which is necessary for matrix completion problems.

Proposition~\ref{prop:equiv} has two immediate implications.  First, the proposed Algorithm~\ref{alg:general} provides a general methodology to turn a large and well-developed class of matrix completion algorithms into algorithms for solving SPCP with missing entries.  Second, many useful results from PCP can be borrowed to study the theoretical properties of robust matrix completion~(\ref{eqn:goal}).  In particular, we show that~(\ref{eqn:goal}) leads to stable recovery.  With Proposition~\ref{prop:equiv}, it suffices to show that~(\ref{eqn:goal2}) achieves stable recovery of $(L_0, S'_0)$ from the data $\PA(X)$ generated by $\PA(L_0+S_0)$ obeying $\|\PA X-\PA(L_0+S_0)\|_F\le\delta$ and $S_0'=\PA S_0$.  Note that $L_0=X_0$.

We need some notations to proceed.  For simplicity, we assume $n=n_1=n_2$ but our results can be easily extended to rectangular matrices ($n_1\neq n_2$).  The Euclidean inner product $\langle Q,R\rangle$ is defined as $\mathrm{trace}(Q^\intercal R)$.  Let $\pz$ be the proportion of observed entries.  Write $\Gamma\subset \Oo$ as the set of locations where the measurements are noisy (but not outliters), and $\Omega=\Oo \backslash \Gamma$ as the support of $S_0'=\POo S_0$; i.e., locations of outliers.  Denote their complements as, respectively, $\Gamma^{\perp}$ and $\Omega^{\perp}$.  We define $\PG$, $\PO$, $\PGc$ and $\POc$ similarly to the definition of $\POo$.  Let $r$ be the rank of $L_0$ and $UDV^{\intercal}$ be the corresponding singular value decomposition of $L_0$, where $U, V\in \mathbb{R}^{n\times r}$ and $D\in\mathbb{R}^{r\times r}$.  Similar to \citet{Candes-Li-Ma11}, we consider the linear space of matrices
\[
  T :=\{U Q^\intercal + R V^{\intercal} : Q, R \in \mathbb{R}^{n\times r}\}.
\]
Write $\PT$ and $\PTp$ as the projection operator to $T$ and $T^\perp$ respectively.  As in \citet{Zhou-Li-Wright10}, we define a set of notations for any pair of matrices $M=(L,S)$. Here, let $\|M\|_F :=\sqrt{\|L\|^2_F + \|S\|_F^2}$ and $\|M\|_\Diamond := \|L\|_* + \lambda\|S\|_1$.  We also define the projection operators $\PT \times \PGc : (L,S) \mapsto (\PT L, \PGc S)$ and $\PTp \times \PG : (L,S) \mapsto (\PTp L, \PG S)$.  In our theoretical development, we consider the following special subspaces
\begin{align*}
  \Psi &:= \{ (L,S) :  L, S\in\mathbb{R}^{n\times n}, \POo L = \POo S,
\POoc L=\POoc S=0\},\\
  \Psi^\perp &:= \{ (L,S) :  L, S\in\mathbb{R}^{n\times n}, \POo L + \POo S =0\}.
\end{align*}
And we write the corresponding projection operators as $\PP$ and $\PPp$ respectively.  Let $M_0=(L_0, S_0')$.  Lastly, for any linear operator $\mathcal{A}$, the operator norm, denoted by $\|\mathcal{A}\|$, is $\sup_{\{\|Q\|_F=1\}} \|\mathcal{A} Q\|_F$.  In below, we write that an event occurs with high probability if it holds with probability at least $1-\bigO(n^{-10})$.

To avoid certain pathological cases \citep[see, e.g.,][]{Candes-Recht09},
an incoherence condition on $U$ and $V$ is usually assumed.  To be specific, this condition with the parameter $\mu$ is:
\begin{equation}
  \max_i \| U^\intercal e_i\|^2 \le \frac{\mu r}{n_1},
  \quad
  \max_i \| V^\intercal e_i\|^2 \le \frac{\mu r}{n_2},
  \quad
  \mbox{and}
  \quad
  \|UV^\intercal\|_\infty \le \sqrt{\frac{\mu r}{n_1 n_2}},
  \label{eqn:incoherence}
\end{equation}
where $\|Q\|$ is the operator norm or 2-norm of matrix $Q$ (i.e., the largest singular value of $Q$) and $\|Q\|_\infty=\max_{i,j}|Q_{i.j}|$.  This condition guarantees that, for small $\mu$, the singular vectors are reasonably spread out.

\begin{thm}[Stable Recovery]\label{thm:stable}
Suppose that $L_0$ obeys~(\ref{eqn:incoherence}) and $\Oo$ is uniformly distributed among all sets of cardinality $m=p_0n^2$ with $p_0>0$ being the proportion of observed entries.  Further suppose that each observed entry is grossly corrupted to be an outlier with probability $\tau$ independently of the others.  Suppose $L_0$ and $S_0$ satisfy $r \le \rho_r n \mu^{-1} (\log n)^{-2}$ and $ \tau \le \tau_s$ with $\rho_r, \tau_s$ being positive numerical constants. Choose $\lambda=1/\sqrt{np_0}$. Then, with high probability (over the choices of $\Omega$ and $\Oo$), for any $X$ obeying $\|\PA X-\PA(L_0+S_0)\|_F\le\delta$, the solution $(\hat{L}, \hat{S})$ to (\ref{eqn:goal3}) satisfies
\[
    \|\hat{L}-L_0\|_F \le \left\{2 + 8\sqrt{n}\left(1+\sqrt{\frac{8}{{\pz}}}\right) \right\} \delta
    \qquad\mbox{and}\qquad 
    \|\hat{S}-S_0'\|_F \le \left\{2 + 8\sqrt{n}\left(1+\sqrt{\frac{8}{{\pz}}}\right) \right\}\sqrt{np_0} \delta,
  \]
where $S_0'=\PA(S_0)$.
\end{thm}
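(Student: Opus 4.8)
The plan is to reduce the statement to the PCP-type program~(\ref{eqn:goal3}) via Proposition~\ref{prop:equiv}, and then to run a dual-certificate perturbation argument in the spirit of \citet{Candes-Plan10} and \citet{Zhou-Li-Wright10}, adapted to the missing-data geometry encoded by the subspaces $\Psi$ and $\Psi^\perp$. Write $H=(H_L,H_S):=(\hat L-L_0,\hat S-S_0')$ and $M_0=(L_0,S_0')$. Since $S_0'=\POo S_0$ and $L_0=X_0$, the pair $M_0$ is feasible for~(\ref{eqn:goal3}), and so is $\hat M=(\hat L,\hat S)$; the triangle inequality through the constraint therefore gives $\|\POo(H_L+H_S)\|_F\le 2\delta$. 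A direct computation of the projection onto $\Psi$ shows that $\PP(L,S)=\tfrac12(\POo(L+S),\POo(L+S))$, whence $\|\PP H\|_F\le\sqrt2\,\delta$. Because $\|H\|_F^2=\|\PP H\|_F^2+\|\PPp H\|_F^2$, the whole problem reduces to controlling the $\Psi^\perp$-component $\PPp H$, and the $\Psi$-component is already pinned at the noise level.

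The engine of the bound is a dual certificate for the noiseless ($\delta=0$) problem. First I would construct, with high probability over $\Oo$ and $\Omega$, a matrix $W$ supported on $\Oo$ obeying the usual certificate conditions relative to the tangent space $T$ and the outlier support $\Omega$: $\|\PTp W\|\le\tfrac12$, $\PT W$ close to $UV^\intercal$, $\PO W=\lambda\,\mathrm{sgn}(S_0')$, and the block on $\Gamma$ bounded by $\tfrac{\lambda}{2}$ in $\ell_\infty$. This certifies that $M_0$ is the unique minimizer in the noiseless case and, more importantly, supplies a subgradient $G_0\in\partial\|M_0\|_\Diamond$ with a strictly positive margin $\beta$. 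The probabilistic hypotheses enter exactly here: the incoherence condition~(\ref{eqn:incoherence}), the rank bound $r\le\rho_r n\mu^{-1}(\log n)^{-2}$, and the outlier rate $\tau\le\tau_s$ are what make a golfing-scheme construction go through, and what control the operator norms $\|\PTp\POo\PT\|$ and $\|\PT-\pz^{-1}\PT\POo\PT\|$.

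With the certificate in hand, optimality $\|M_0+H\|_\Diamond\le\|M_0\|_\Diamond$ combined with the subgradient inequality $\|M_0+H\|_\Diamond\ge\|M_0\|_\Diamond+\langle G_0,H\rangle$ forces $\langle G_0,H\rangle\le0$. Splitting $G_0$ and $H$ along $\Psi$ and $\Psi^\perp$ and invoking the margin converts this into a bound of the shape $\beta\,\|\PPp H\|_\Diamond\lesssim\delta$, following the bookkeeping of \citet{She-Owen11} transported to the $\Psi/\Psi^\perp$ split. The last step is to pass from this diamond-norm control to a Frobenius bound: the weight $\lambda=1/\sqrt{0.1n}$ together with the $\ell_1$-to-Frobenius and nuclear-to-Frobenius passages on $\Psi^\perp$ introduces dimensional factors, while the restricted injectivity of the rescaled sampling operator $\pz^{-1}\POo$ on the low-rank tangent directions (again by the concentration estimates above) supplies the $1/\spz$, so that $\|\PPp H\|_F\lesssim(n/\spz)\,\delta$. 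Combining with $\|\PP H\|_F\le\sqrt2\,\delta$ yields $\|H\|_F^2\le Kn^2\delta^2/\pz$, which is the claim after recalling $\|H\|_F^2=\|\hat L-L_0\|_F^2+\|\hat S-S_0'\|_F^2$.

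I expect the main obstacle to be the certificate construction and its accompanying concentration bounds in the fully combined regime, rather than the perturbation bookkeeping. The existing literature supplies certificates for low-rank-plus-sparse recovery with missing entries but no noise \citep{Candes-Li-Ma11}, or with noise but full observation \citep{Zhou-Li-Wright10}; here the outlier support $\Omega$ is a random subset of the already-random observed set $\Oo$, so the golfing scheme and the bounds on $\|\PTp\POo\PT\|$ and on the sparse block must be re-derived with this nested randomness. Equally delicate is obtaining the restricted-injectivity estimate with the sharp $n^2/\pz$ dependence, since any looseness in the conditioning constant on $\Psi^\perp$, or in the diamond-to-Frobenius conversion, would degrade the stated rate; matching the dimension-free $\Psi$-level bound to the $(n/\spz)\delta$ bound on $\Psi^\perp$ so that the two combine cleanly is where the careful accounting is needed.
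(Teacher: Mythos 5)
Your skeleton coincides with the paper's: reduce to~(\ref{eqn:goal3}) via Proposition~\ref{prop:equiv}, write $\hat M=M_0+H$, use feasibility of both $M_0$ and $\hat M$ to pin $\|\PP H\|_F\le\sqrt{2}\,\delta$, then control $\PPp H$ via a dual certificate plus optimality, and assemble $\|H\|_F^2\le Kn^2\delta^2/\pz$. The genuine gap is in how you control $\PPp H$. Your pivotal claim --- that optimality and the certificate's margin yield $\beta\,\|\PPp H\|_\Diamond\lesssim\delta$ --- is not obtainable: a certificate at $M_0$ has elbow room only in the directions the norm ``sees'' at $M_0$, namely $\PTp$ for the low-rank block and $\PG$ for the sparse block, and gives no control whatsoever over the component $(\PT\times\PGc)\HPp$. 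Indeed, if your claim held, then from $\|\cdot\|_F\le\|\cdot\|_*$, $\|\cdot\|_F\le\|\cdot\|_1$ and $\lambda^{-1}=\sqrt{0.1n}$ one would get $\|\PPp H\|_F\le\max\{1,\lambda^{-1}\}\|\PPp H\|_\Diamond=\bigO(\sqrt{n}\,\delta)$, strictly stronger than the theorem --- a sign the step overreaches. The paper's Proposition~\ref{prop:stable} instead splits $\|\PPp H\|_F^2=\|(\PTp\times\PG)\HPp\|_F^2+\|(\PT\times\PGc)\HPp\|_F^2$; the elbow-room lemma (Lemma~\ref{lem:stable4}, a modified Lemma A.2 of \citealp{Candes-Li-Ma11}) plus optimality controls only the first piece, and with right-hand side $(\sqrt{n}+\lambda n)\delta$ rather than $\delta$ (the dimensional factor arises already when converting $\|\HP\|_\Diamond$ to $\|\HP\|_F$). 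The second, certificate-invisible piece is then bounded by the first using the identity $\PP\HPp=0$, which forces $\|\PP(\PT\times\PGc)\HPp\|_F=\|\PP(\PTp\times\PG)\HPp\|_F$, combined with restricted injectivity of $\PP$ on $\PT\times\PGc$: $\|\PP(\PT\times\PGc)M\|_F^2\ge\frac{\pz}{8}\|(\PT\times\PGc)M\|_F^2$ (Lemma~\ref{lem:stable3}, which itself requires $\|\POo\PT Q\|_F^2\ge\pz\|\PT Q\|_F^2/2$ and $\|\PT\PO\|^2\le\pz/8$). This is where the $1/\pz$ actually enters. Your sketch does mention restricted injectivity, but only inside a ``diamond-to-Frobenius conversion,'' a role in which it would be redundant given your (unobtainable) diamond bound; as stated, your argument does not close.

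The second issue is that the ``main obstacle'' you flag --- re-deriving the golfing-scheme certificate because the outlier support $\Omega$ is a random subset of the already-random $\Oo$ --- does not exist. Theorem 1.2 of \citet{Candes-Li-Ma11} treats exactly this missing-plus-corrupted (noiseless) regime with nested randomness, and the paper's proof of Theorem~\ref{thm:stable} is purely a citation exercise on the probabilistic side: their pp.~33--35 supply a pair $(W,F)$ obeying~(\ref{eqn:dual}); their Lemma A.3 gives $\|\PT\PGc Q\|_F\le n\|\PTp\PGc Q\|_F$; their Theorem 2.6 together with Lemma~\ref{lem:stable1} gives $\|\POo\PT Q\|_F^2\ge\pz\|\PT Q\|_F^2/2$; and \citet{Candes-Recht09} gives $\|\PT\PO\|^2\le\pz/8$ for small $\tau$. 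The noise level $\delta$ never enters the certificate conditions, so nothing probabilistic needs to be re-proved; the only new content in the paper is the deterministic perturbation argument of Proposition~\ref{prop:stable} --- precisely the part your bookkeeping gets wrong --- while the part you defer as hard is available off the shelf.
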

The proof of this theorem can be found in Appendix~\ref{app:prooft1}.

\section{Empirical Performances}\label{sec:sim}
Two sets of numerical experiments and a real data application were conducted to evaluate the practical performances of the proposed methodology.  In particular the performance of the proposed procedure \robustimpute\ is compared to the performance of \softimpute\ developed by \citet{Mazumder-Hastie-Tibshirani10}.  The reasons \softimpute\ is selected for comparison are that it is one of the most popular matrix completion methods due to its simplicity and scalability, and that it is shown by \citet{Mazumder-Hastie-Tibshirani10} that it generally produces superior results to other common matrix completion methods such as {\sc MMMF} of \citet{Rennie-Srebro05}, {\sc SVT} of \citet{Cai-Candes-Shen10} and {\sc OptSpace} of \citet{Keshavan-Montanari-Oh10a}

\subsection{Experiment 1: Gaussian Entries}
This experiment covers those settings used in \citet[][Section~9]{Mazumder-Hastie-Tibshirani10} and additional settings with different proportions of missing entries and outliers.  For each simulated data set, the target matrix was generated as $X_0 = U V^\intercal$, where $U$ and $V$ are random matrices of size $100 \times r$ with independent standard normal Gaussian entries.  Then each entry of $X_0$ is contaminated by additional independent Gaussian noise with standard deviation $\sigma$, which is set to a value such that the signal-to-noise ratio (SNR) is 1.  Here SNR is defined as
\[
  \mbox{SNR} = s = \sqrt{\frac{\Var(X_0)}{\sigma^2}},
\]
where $\Var(X_0)$ is the variance over all the entries of $X_0$ conditional on $U$ and $V$.  Next, for each entry, with probability $p$ yet another independent Gaussian noise with $\sigma/4$ is added; these entries are treated as outliers.  We call this contaminated version of $X_0$ as $X$.  Lastly, $\Oo$ is uniformly random over the indices of the matrix with missing proportion as $q$.  In this study, we used two values for $r$ (5, 10), three values for $p$ (0, 0.05, 0.1) and three values for $q$ (0.25, 0.5, 0.75).  Thus in total we have 18 simulation settings.  For each setting 200 simulated data sets were generated, and both the non-robust method \softimpute\ and the proposed \robustimpute\ were applied to recover $X_0$.
We also provide two oracle fittings as references. They are produced by
applying \softimpute\ to the simulated data set with outlying observed entries removed (i.e.,
treated as missing entries),
and with outlying observed entries replaced by non-outlying contaminated entries (i.e., contaminated by independent Gaussian noise with standard deivation $\sigma$) respectively.
The first oracle fitting is referred to as oracle1 while the second one is called oracle2 in the following.

For the two simulation settings with $r=10$ and $q=0.5$, and one with $p=0$ while the other with $p=0.1$, Figure~\ref{fig:sim} summarizes the average number of singular value decompositions (SVDs) used and the average test error.  Here test error is defined as
\[
  \mbox{Test error} = \frac{\| \POoc(X_0 - \hat{X}) \|_F^2}{\| \POoc X_0 \|_F^2},
\]
where $\PG$ is the projection operator to the set of locations of the observed noisy entries (but not outliers) $\Gamma$, and $\hat{X}$ is an estimate of $X_0$.
From Figure~\ref{fig:sim}~(Top), one can see that the performance of \robustimpute\ is slightly inferior to \softimpute\ in the case of no outliers ($p=0$), while \robustimpute\ gave significantly better results when outliers were present ($p=0.1$).  The inferior performance of \robustimpute\ under the absence of outliers is not surprising, as it is widely known in the statistical literature that a small fraction of statistical efficiency would be lost when a robust method is applied to a data set without outliers.  However, it is also known that the gain could be substantial if outliers did present.

As for computational requirements, one can see from Figure~\ref{fig:sim} (Bottom) that \robustimpute\ only used slightly more SVDs on average.  For ranks greater than 5, the number of SVDs used by \robustimpute\ only differs from \softimpute\ on average by less than 1.  This suggests that \robustimpute\ is slightly more computationally demanding than \softimpute.

Similar experimental results were obtained for the remaining 16 simulation settings.  For brevity, the corresponding results are omitted here but can be found in the supplementary document.

From this experiment some empirical conclusions can be drawn.  When there is no outlier, \softimpute\ gives slightly better results, while with outliers, results from \robustimpute\ are substantially better.  Since that in practice one often does not know if outliers are present or not, and that \robustimpute\ is not much more computationally demanding than \softimpute, it seems that \robustimpute\ is the choice of method if one wants to be more conservative.

\begin{figure}[htpb]
  \includegraphics[height=9cm]{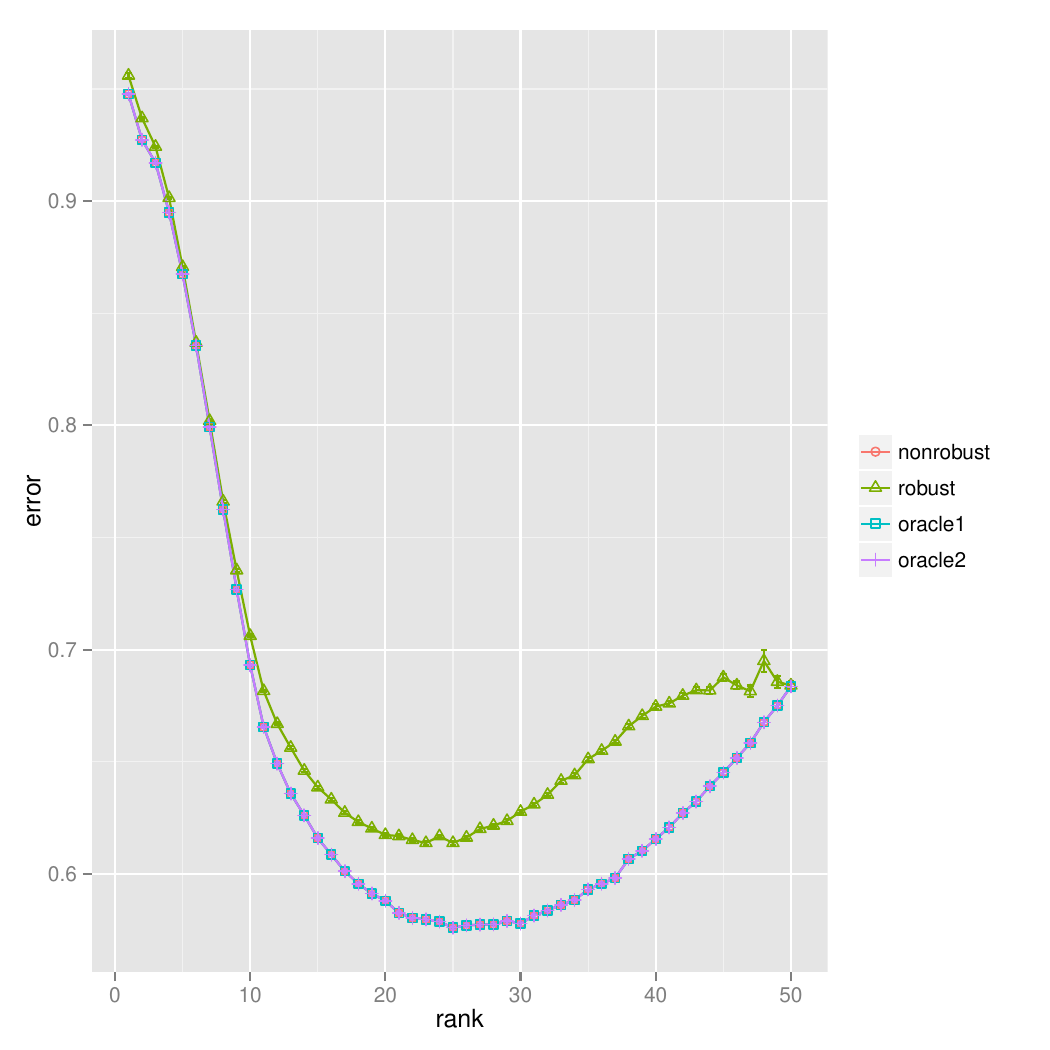}
  \hspace{-0.5cm}
  \includegraphics[height=9cm]{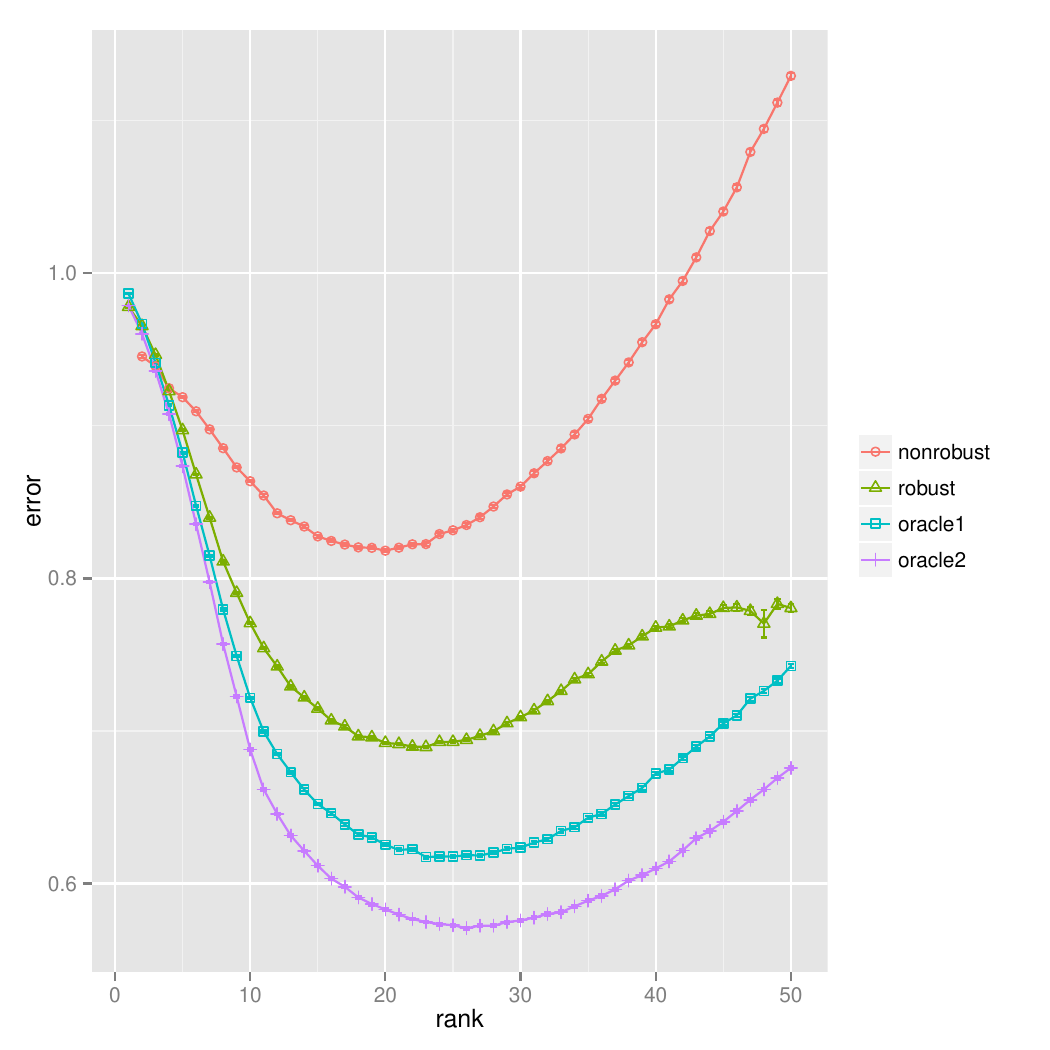}
  \includegraphics[height=9cm]{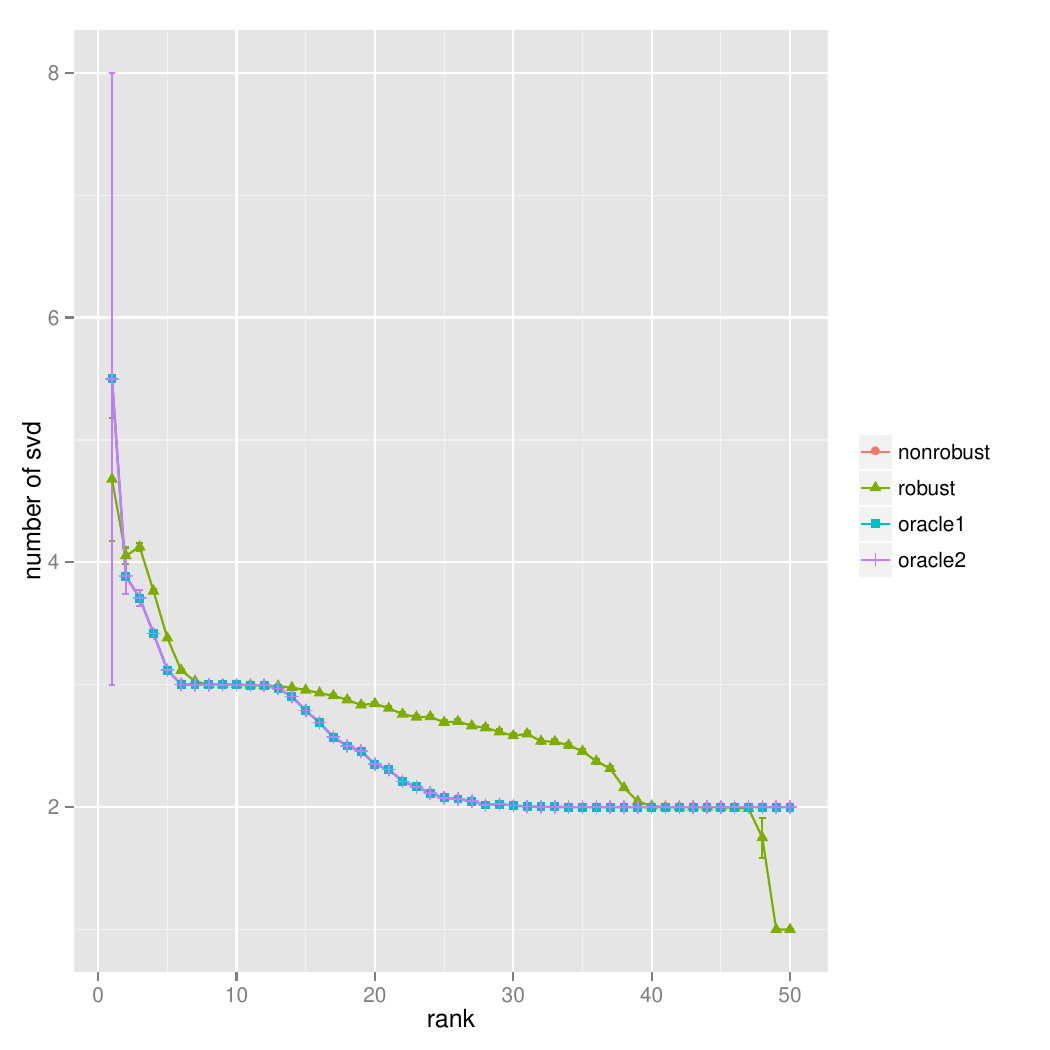}
  \hspace{-0.5cm}
  \includegraphics[height=9cm]{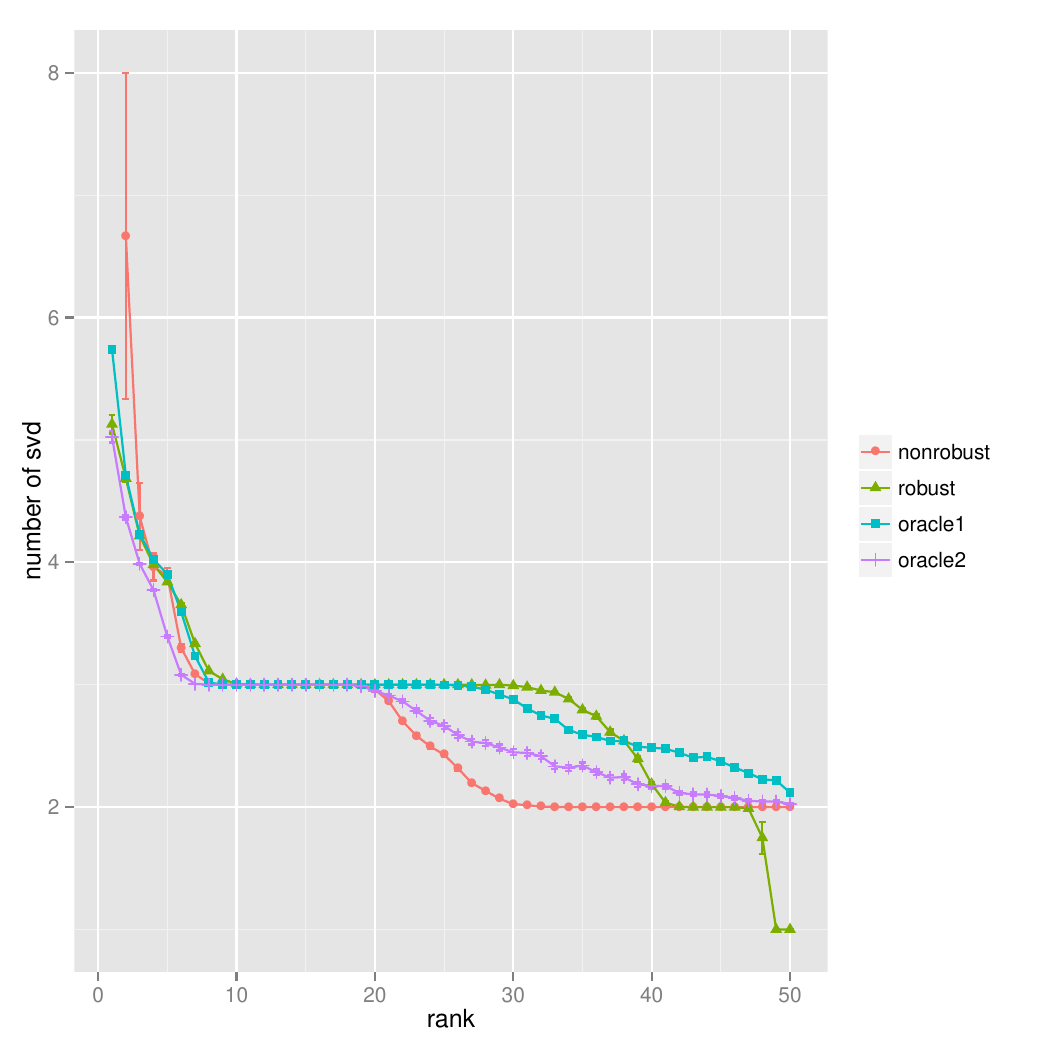}
  \caption{Top: The average test errors with their standard error bands (plus or minus one standard error).
    Bottom: The average number of singular value decompositions used with
    standard error bands (plus or minus one standard error).
    Left: results for the simulation setting: $r=10$, $p=0$ and $q=0.5$.
    Right: results for the simulation setting: $r=10$, $p=0.1$ and $q=0.5$.}\label{fig:sim}
\end{figure}

\subsection{Experiment 2: Image Inpainting}
In this experiment the target matrix is the so-called Lena image that has been used by many authors in the image processing literature.  It consists of $256\times 256$ pixels and is shown in Figure~\ref{fig:lena} (Left).  The simulated data sets were generated via contaminating this Lena image by adding Gaussian noises and/or outliers in the following manner.  First independent Gaussian noise was added to each pixel, where the standard deviation of the noise was set such that the SNR is 3.  Next, 10\% of the pixels were selected as outliers, and to them additional independent Gaussian noises with SNR 3/4 were added.  In terms of selecting missing pixels, two mechanisms were considered.  In the first one 40\% of the pixels were randomly chosen as missing pixels, while in the second mechanism only 10\% were missing but they were clustered together to form patches.  Two typical simulated data sets are shown in Figure~\ref{fig:lena} (Middle).  Note that Theorem~\ref{thm:stable} does not cover the second missing mechanism.  For each missing mechanism, 200 data sets were generated and both \softimpute\ and \robustimpute\ were applied to reconstruct Lena.

The average training and testing errors\footnote{
The solution path (formed by the pre-specified set of $\gamma$'s) may not contain any solution of rank 50, 75, 100 and 125.  Thus, the average errors were computed over those fittings that contained the corresponding fitted ranks.  At most 2\% of these fittings were discarded due to this reason.  
}
of the recovered images of matrix ranks 50, 75, 100 and 125 are reported in Table~\ref{tab:real}.  For both missing mechanisms, \softimpute\ tends to have lower training errors, but larger testing errors when compared to \robustimpute.  In other words, \softimpute\ tends to over-fit the data, and \robustimpute\ seems to provide better results.  Lastly, for visual evaluation, the recovered image of rank 100 using \robustimpute\ is displayed in Figure~\ref{tab:real} (Right).  From this one can see that the proposed \robustimpute\ provided good recoveries under both missing mechanisms.

\begin{table}[htpb]
\centering
\caption{The average training and testing errors for the Lena experiment.}\label{tab:real}
\vspace{0.3cm}
{\small
\begin{tabular}{cc|cccc|cccc}
  \hline\hline
 &&\multicolumn{4}{c}{training error} & \multicolumn{4}{|c}{testing error}\\
 &rank & 50 & 75 & 100 & 125 & 50 & 75 & 100 & 125 \\
  \hline\hline
independent & \softimpute & 0.0499 & 0.0351 & 0.0221 & 0.0113 & 0.0578 & 0.0565 & 0.0581 & 0.0620 \\
missing &\robustimpute & 0.0486 & 0.0371 & 0.0282 & 0.0252 & 0.0546 & 0.0540 & 0.0557 & 0.0571 \\
\hline
clustered & \softimpute & 0.0487 & 0.0386 & 0.0296 & 0.0214 & 0.0756 & 0.0751 & 0.0760 & 0.0781 \\
missing & \robustimpute & 0.0468 & 0.0390 & 0.0321 & 0.0268 & 0.0716 & 0.0714 & 0.0723 & 0.0742 \\
   \hline\hline
\end{tabular}
}
\end{table}

\begin{figure}
  \begin{tabular}{ccc}
 \multirow{-5}{*}{
\includegraphics[height=5cm]{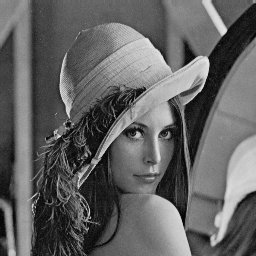}} &
\includegraphics[height=5cm]{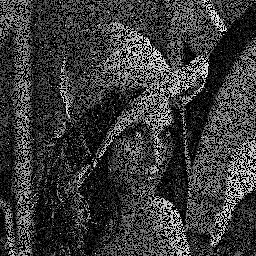} &
\includegraphics[height=5cm]{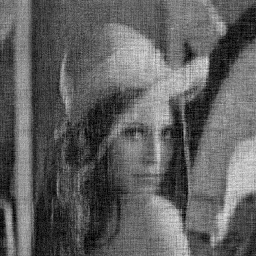}\\  & 
\includegraphics[height=5cm]{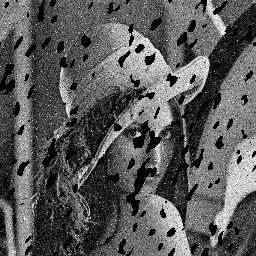} &
\includegraphics[height=5cm]{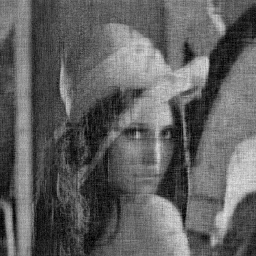}
\end{tabular}
  \caption{Left: the Lena image. Middle: degraded Lena images by the independent missing mechanism (Top) and the clustered missing mechanism (Down). Right: corresponding recovered images of rank 100 via \robustimpute.}\label{fig:lena}
\end{figure}

\subsection{Real data application: Landsat Thematic Mapper}
In this application the target matrix is an image from a Landsat Thematic Mapper data set publicly available at \texttt{http://ternauscover.science.uq.edu.au/}.
This data set contains 149 multiband images of $100\times 100$ pixels, with each image consists of six bands (blue, green and red with three infrared bands).  The scene is centered on the Tumbarumba flux tower on the western slopes of the Snowy Mountains in Australia.  Due to wild fires or related reasons, some pixels are of value zero which can be treated as missing.  Also, due to detector malfunctioning, some isolated pixels have values much higher than the remaining pixels, which can be treated as outliers.  We selected an image band with a high missing rate (27.6\%) to test our procedure.

To evaluate the recovered matrix, the observed pixels were split into training, validation and testing sets
consisting 80\%, 10\% and 10\% of the observed (nonzero) entries respectively.
We used the validation set to tune $\gamma$.
The validation errors are computed in two ways:
mean squared error (MSE) $\sqrt{\sum_{(i,j)\in\mathcal{V}} (X_{ij} - \hat{X}_{ij})^2/|\mathcal{V}|}$ and
mean absolute deviation (MAD) $\mathrm{median}\{|X_{ij} - \hat{X}_{ij}|: (i,j)\in\mathcal{V} \}$,
where $\mathcal{V}$ represents the validation set. Similarly, we compute the
testing errors in terms of MSE and MAD.
Note that the validation and testing sets may contain outliers
and therefore MAD serves as a robust and reliable performance measure.
The corresponding results are shown in Table \ref{tab:real2}.  From this table it can be seen that with the presence of outliers, \robustimpute provided better results.

\begin{table}[htpb]
\centering
\caption{Rank and testing errors of the real data application.}\label{tab:real2}
\vspace{0.3cm}
{\small
\begin{tabular}{c|ccc|ccc}
  \hline\hline
 &\multicolumn{3}{c}{tuning by MSE} & \multicolumn{3}{|c}{tuning by MAD}\\
  & rank & MSE & MAD & rank & MSE & MAD \\
  \hline\hline
 \softimpute & 24 & 45.20 & 31.15 & 21 & 45.23 & 31.15\\
\robustimpute & 24 & 44.63 & 29.00 & 29 & 44.57 & 28.76\\
   \hline\hline
\end{tabular}
}
\end{table}

\section{Concluding remarks}\label{sec:con}
In this paper a classical idea from robust statistics has been brought to the matrix completion problem.  The result is a new matrix completion method that can handle noisy and outlying entries.  This method uses the Huber function to downweigh the effects of outliers.  A new algorithm is developed to solve the corresponding optimization problem.  This algorithm is relatively fast, easy to implement and monotonic convergent.  It can be paired with any existing (non-robust) matrix completion methods to make such methods robust against outliers.  We also developed a specialized version of this algorithm, called \robustimpute.  Its promising empirical performance has been illustrated via numerical experiments.  Lastly, we have shown that the proposed method is stable; that is, with high probability, the error of recovered matrix is bounded by a constant proportional to the noise level.

\section*{Acknowledgment}
The authors are most grateful to the referees and the action editor for their many constructive and useful comments, which led to a much improved version of the paper. The work of Wong was partially supported by the National Science Foundation under Grants DMS-1612985 and DMS-1711952 (subcontract).  The work of Lee was partially supported by the National Science Foundation under Grants DMS-1512945 and DMS-1513484.
\appendix

\section{Technical Details}
\label{app:tech}

\subsection{Proofs of Proposition~\protect\ref{prop:mono}}
\label{app:proofp1}

\begin{proof}

  By rewriting
  \[
    \begin{split}
    \| \PA \bZ^{(k+1)} - \PA\bY^{(k+1)}\|_F^2 &=
    \|\PA\bZ^{(k+1)} - \PA\bY^{(k)}\|_F^2
    + \| \PA\bY^{(k)} - \PA\bY^{(k+1)}\|_F^2\\
    &\quad 2\times \mathrm{trace}\left[\{\PA\bZ^{(k+1)} - \PA\bY^{(k)}\}
    \{\PA\bY^{(k)} - \PA\bY^{(k+1)}\}^\intercal\right],
  \end{split}
  \]
  and using $f(Y^{(k+1)} | Z^{(k+1)}) \le f(Y^{(k)} | Z^{(k+1)})$,
  we have
  \[
    \begin{split}
    &\frac{1}{2}\| \PA\bY^{(k)} - \PA\bY^{(k+1)}\|_F^2 
     + \mathrm{trace}\left[\{\PA\bZ^{(k+1)} - \PA\bY^{(k)}\}
    \{\PA\bY^{(k)} - \PA\bY^{(k+1)}\}^\intercal\right]\\
  & \quad + \gamma\|\bY^{(k+1)}\|_* \le
  \gamma \| \bY^{(k)}\|_*.
\end{split}
  \]
  Thus, by substituting $Z^{(k+1)} = \PA \bY^{(k)} +\frac{1}{2} \rho_c' (\PA\bX-\PA\bY^{(k)})$,
  \begin{equation}
    \begin{split}
    &\frac{1}{2}\| \PA\bY^{(k)} - \PA\bY^{(k+1)}\|_F^2
     + \frac{1}{2}\mathrm{trace}\left[\rho'_c(\PA\bX - \PA\bY^{(k)})
    \{\PA\bY^{(k)} - \PA\bY^{(k+1)}\}^\intercal\right]\\
  & \quad + \gamma\|\bY^{(k+1)}\|_* \le
  \gamma \| \bY^{(k)}\|_*. \label{lem1:eqn1}
\end{split}
  \end{equation}
  Here we abuse the notation slightly so that
  $\rho'_c$ of a matrix simply means the matrix formed by applying
  $\rho'_c$ to its entries.
  Note that for each $(i,j)\in\Oo$, by Taylor's expansion,
  \begin{align*}
    \rho_c(X_{ij}-Y_{ij}^{(k+1)}) &= \rho(X_{ij}- Y_{ij}^{(k)})
    + (Y_{ij}^{(k)} - Y_{ij}^{(k)})\rho'_c(X_{ij} - Y_{ij}^{(k)})
    + \int^{X_{ij} - Y_{ij}^{(k+1)}}_{X_{ij}-Y_{ij}^{(k)}}
    (X_{ij}-Y_{ij}^{(k+1)}- t) \rho_c''(t) dt,
  \end{align*}
  and the last integral term is less than or equal to $(Y_{ij}^{(k)} - Y_{ij}^{(k+1)})^2$
  due to $\rho_c''\le 2$ almost everywhere.
  Thus,
  \[
    \begin{split}
    \sum_{(i,j)\in\Oo} \rho_c( X_{ij}-Y_{ij}^{(k+1)} )
    &\le
    \sum_{(i,j)\in\Oo} \rho_c( X_{ij}-Y_{ij}^{(k)} )\\
    &\quad +
    \mathrm{trace} \left[\rho_c'(\PA\bX - \PA \bY^{(k)})\{ 
      \PA\bY^{(k)} - \PA\bY^{(k+1)} \}^\intercal\right]\\
    &\quad +
    \| \PA\bY^{(k)} - \PA\bY^{(k+1)}\|_F^2.
  \end{split}
  \]
  Now, plugging it into~(\ref{lem1:eqn1}), we have $g(\bY^{(k+1)}) \le g(\bY^{(k)})$.
\end{proof}

\begin{proof}[Alternative proof of Proposition \ref{prop:mono}]
  Similar to the proof of Proposition \ref{prop:equiv} in Section \ref{app:proofp2},
  one can show that
  \begin{equation}
    g(Y) = \min_S \frac{1}{2}\|\POo X - \POo Y - \POo S \|_F^2 + \gamma \|Y\|_* + c\|S\|_1,
    \label{eqn:galter}
  \end{equation}
  where the minimizer is $S(Y)=(1/2) \psi_c(\POo X - \POo Y)$.
  Now, one can show that
  \begin{align*}
    Z^{(k+1)}&=\POo Y^{(k)}+(1/2)\psi_c(\POo X-\POo Y^{(k)})\\
   &= \POo X + (1/2)\psi_c(\POo Y^{(k)} - \POo X)\\
   &= \POo X - (1/2)\psi_c( \POo X - \POo Y^{(k)})\\
   &= \POo X - S(Y^{(k)}).
  \end{align*}
  Now, due to (\ref{eqn:galter}),
  \begin{align*}
    g(Y^{(k+1)}) &=
\min_S \frac{1}{2}\|\POo X - \POo Y^{(k+1)} - \POo S \|_F^2 + \gamma \|Y^{(k+1)}\|_* + c\|S\|_1\\
&\le \frac{1}{2}\|\POo X - \POo Y^{(k+1)} - \POo S(Y^{(k)}) \|_F^2 + \gamma \|Y^{(k+1)}\|_* + c\|S(Y^{(k)})\|_1\\
&= f(Y^{(k+1)} | X -  S(Y^{(k)})) +c\|S(Y^{(k)})\|_1\\
&= f(Y^{(k+1)} |\POo X -  S(Y^{(k)})) +c\|S(Y^{(k)})\|_1\\
&= f(Y^{(k+1)} | Z^{(k+1)}) +c\|S(Y^{(k)})\|_1\\
&\le f(Y^{(k)} | Z^{(k+1)}) +c\|S(Y^{(k)})\|_1\\
&= \frac{1}{2}\|\POo X - \POo Y^{(k)} - \POo S(Y^{(k)}) \|_F^2 + \gamma \|Y^{(k)}\|_* + c\|S(Y^{(k)})\|_1\\
&= g(Y^{(k)}).
  \end{align*}
\end{proof}

\subsection{Proof of Theorem~\protect\ref{thm:global} for Algorithm~\protect\ref{alg:general}}
\label{app:proofalg}
\begin{proof}
This proof closely follows the proofs of Lemma~2.3 and Theorem~3.1 in \citet{Beck-Teboulle09}
by modifying their approximation model to
\begin{align*}
  \zeta(\bY, \tilde{\bY}) &= g_1(\tbY) + \langle \bY - \tbY, \nabla g_1(\tbY)\rangle
  +\frac{1}{2}\| \POo \bY - \POo \tbY\|_F^2 + g_2(\bY)\\
  &= 
  g_1(\tbY) -\frac{1}{2} \langle \bY - \tbY, \psi_c(\POo \bX - \POo \tbY)\rangle
  +\frac{1}{2}\| \POo \bY - \POo \tbY\|_F^2 + g_2(\bY)\\
  &= 
  g_1(\tbY) -\frac{1}{2} \langle \POo\bY - \POo\tbY, \psi_c(\POo \bX - \POo \tbY)\rangle
  +\frac{1}{2}\| \POo \bY - \POo \tbY\|_F^2 + g_2(\bY),
\end{align*}
where $\langle \bX, \bY \rangle = \sum_{i,j} \bX_{ij}\bY_{ij}$.
It can be shown that $\arg\min_\bY \zeta(Y, \tbY)$ is
the same as $\arg\min_\bY f(Y|Z)$, where $Z=\POo \tbY + (1/2)\psi_c(\POo
X - \POo \tbY)$, in Steps 2(a)-(c) of Algorithm \ref{alg:general}.
Let $\Pi(\tbY) =\arg\min_\bY \zeta(\bY, \tbY)$.
Therefore $\bY^{(k+1)}=\Pi(\bY^{(k)})$.
Moreover,
\[
g_1(\bY) \le  g_1(\tbY) + \langle \bY - \tbY, \nabla g_1(\tbY)\rangle
  +\frac{1}{2}\| \POo \bY - \POo \tbY\|_F^2,
\]
for any $\bY$ and $\tbY$.
Therefore, $g(\Pi(\tbY))\le \zeta(\Pi(\tbY), \tbY)$ for any
$\tbY\in\mathbb{R}^{n_1 \times n_2}$.

To proceed, we need a modified version of Lemma~2.3 in \citet{Beck-Teboulle09}.
\begin{lem}\label{lem:global}
  For any $\tbY, \bY\in\mathbb{R}^{n_1\times n_2}$,
  \[
    g(\bY)-g(\PtbY) \ge \frac{1}{2}\| \POo\PtbY - \POo\tbY \|_F^2 + \langle \POo\tbY - \POo\bY,
    \POo\PtbY - \POo\bY\rangle.
  \]
\end{lem}
This lemma is proved as follows.
Since $\PtbY$ is the minimizer of the convex function $\zeta(\cdot , \tbY)$,
there exists a $b(\tbY)\in\partial g_2(\PtbY)$, the subdifferential of $g_2$ at $\PtbY$, such that
$\nabla g_1(\tbY) + \POo \PtbY - \POo \tbY + b(\tbY)=0$.
By the convexity of $g_1$ and $g_2$,
\begin{align*}
  g_1(\bY) &\ge g_1 (\tbY) -\frac{1}{2}\langle \bY - \tbY, \psi_c (\POo X -\POo \tbY)\rangle\\
  g_2(\bY) &\ge g_2(\PtbY) - \langle \bY - \PtbY, b(\tbY)\rangle.
\end{align*}
Therefore,
\begin{equation}
  g(\bY)\ge g_1 (\tbY) -\frac{1}{2}\langle \bY - \tbY, \psi_c (\POo X -\POo \tbY)\rangle
  +g_2(\PtbY) - \langle \bY - \PtbY, b(\tbY)\rangle. \label{lem:global:eq1}
\end{equation}
Since $g(\Pi(\tbY))\le \zeta(\Pi(\tbY), \tbY)$, we have $g(\bY) - g(\PtbY) \ge g(Y) - \zeta(\Pi(\tbY), \tbY)$.
Plugging in (\ref{lem:global:eq1}), the definition of $\zeta$ and the condition for $b$, the conclusion of the lemma follows.

Using Lemma \ref{lem:global} with $\bY=\bY^*$ and $\tbY = \bY^{(k)}$,
we have
\[
2\{  g(\bY^*)- g(\bY^{(k)}) \} \ge \| \POo \bY^* - \POo \bY^{(k+1)}\|_F^2 - \| \POo \bY^* - \POo \bY^{(k)}\|_F^2.
\]
Summing it over $k=0,\dots, m-1$,
\begin{equation}
2\left\{ m g(\bY^*)- \sum^{m-1}_{k=0}g(\bY^{(k)}) \right\} \ge \| \POo \bY^* -
\POo \bY^{(m)}\|_F^2 - \| \POo \bY^* - \POo \bY^{(0)}\|_F^2.\label{eqn:global:eq2}
\end{equation}
Applying Lemma \ref{lem:global} with $\bY=\tbY=\bY^{(k)}$,
\[
2\left\{  g(\bY^{(k)}) - g(\bY^{(k+1)})\right\} \ge \|\POo \bY^{(k+1)} - \POo \bY^{(k)}\|_F^2.
\]
Multiplying it by $k$ and summing over $k=0,\dots, m-1$,
\begin{equation}
  2\left\{-m g(\bY^{(m)}) + \sum^{m-1}_{k=0} g(\bY^{(k+1)}) \right\}
  \ge \sum^{m-1}_{k=0} k \| \POo \bY^{(k+1)}-\POo \bY^{(k)}\|_F^2.\label{eqn:global:eq3}
\end{equation}
Adding (\ref{eqn:global:eq2}) and (\ref{eqn:global:eq3}),
\[
  \begin{split}
2\left\{ g(\bY^*) - g(\bY^{(m)})  \right\} &\ge \| \POo \bY^* -
\POo \bY^{(m)}\|_F^2 - \| \POo \bY^* - \POo \bY^{(0)}\|_F^2\\
&\quad+
\sum^{m-1}_{k=0} k \| \POo \bY^{(k+1)}-\POo \bY^{(k)}\|_F^2.
  \end{split}
\]
Therefore,
\[
g(\bY^{(m)})-g(\bY^*) \le \frac{\|\POo \bY^* - \POo Y^{(0)}\|_F^2}{2m}.
\]
\end{proof}

\subsection{Proof of Proposition~\protect\ref{prop:equiv}}
\label{app:proofp2}

\begin{proof}
  Since both (\ref{eqn:goal}) and (\ref{eqn:goal2}) are convex, we only need to
  consider the sub-gradients.
  The sub-gradient conditions for minimizier of (\ref{eqn:goal}) are given as follows:
  \begin{align}
    0 &\in -\frac{1}{2} \rho'_c(\PA\bX-\PA\bY) + \gamma \partial \|\bY\|_*,
    \label{eqn:lem2:cond1}
  \end{align}
  where $\partial\|\cdot\|_*$ represents the set of subgradients of the nuclear norm.
  The sub-gradient conditions for minimizier of (\ref{eqn:goal2}) are given as follows:
  \begin{align}
    {0} &\in - \PA(\bX-L-S) + \gamma \partial \|L\|_*\label{eqn:lem2:cond21}\\
    {0} &\in - \PA(\bX-L-S) + c \partial \|S\|_1,\label{eqn:lem2:cond22}
  \end{align}
  where
  $\partial\|\cdot\|_1$ represents the set of subgradients of $\|\cdot\|_1$.
  Here (\ref{eqn:lem2:cond22}) implies, for $(i,j)\in\Oo$,
  \begin{equation}
   S_{ij} =\begin{cases}
     X_{ij} - L_{ij} - c, & X_{ij} - L_{ij}>c\\
     0, & |X_{ij} -L_{ij} | \le c\\
     X_{ij} - L_{ij} + c, & X_{ij}- L_{ij} <-c \label{eqn:lem2:eqn1}
   \end{cases}
 \end{equation}
  and $S_{ij}=0$ for $(i,j)\in\Oo^\perp$.
  Note, for $(i,j)\in \Oo$, $X_{ij}-L_{ij}-S_{ij}=\rho_c'(X_{ij}-L_{ij})/2$.
  Plugging it into (\ref{eqn:lem2:cond21}),
  we have (\ref{eqn:lem2:cond1}) and thus this proves the proposition.
\end{proof}

\subsection{Proof of Theorem~\protect\ref{thm:stable}}
\label{app:prooft1}
{To prove Theorem \ref{thm:stable}, we first show three lemmas and one proposition.

\begin{lem}[Modified Lemma A.2 in \citep{Candes-Li-Ma11}]\label{lem:stable4}
  Assume that for any matrix $Q$, $\|\PT \PGc Q\|_F\le n \|\PTp\PGc Q\|_F$.
  Suppose there is a pair $(W,F)$ obeying
  \begin{align}
    \begin{cases}
    \PT W =0,\quad \|W\| < 1/2,\\
    \PGc F=0, \quad \|F\|_\infty< 1/2,\\
    UV^\intercal + W + \PT D = \lambda (\mathrm{sgn}(S_0') + F)\quad \mathrm{with}\quad
    \| \PT D\|_F \le n^{-2}.
    \label{eqn:dual}
  \end{cases}
  \end{align}
  Then for any perturbation $H=(H_L, H_S)$ satisfying $\POo H_L+ \POo H_S=0$,
  \[
    \| M_0 -H\|_\Diamond \ge \| M_0\|_\Diamond +
    \left(\frac{1}{2} - \frac{1}{n}\right)\|\PTp H_L\|_*
    + \left( \frac{\lambda}{2} - \frac{n+1}{n^2}\right) \| \PG H_L\|_1.
  \]
  \end{lem}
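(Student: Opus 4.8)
The plan is to lower-bound $\|M_0 - H\|_\Diamond = \|L_0 - H_L\|_* + \lambda\|S_0' - H_S\|_1$ by a pair of first-order (subgradient) expansions and then use the dual certificate~(\ref{eqn:dual}) to control the resulting linear terms. Because $H_L + H_S = 0$, I would first substitute $H_S = -H_L$ everywhere, so that both pieces are expressed through $H_L$ alone; this is exactly what lets the nuclear-norm and $\ell_1$ contributions couple through the single identity in~(\ref{eqn:dual}).

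For the nuclear-norm piece I would use that $UV^\intercal + W_1$ is a subgradient of $\|\cdot\|_*$ at $L_0$ whenever $\PT W_1 = 0$ and $\|W_1\|\le 1$, and pick $W_1\in T^\perp$ realizing $\langle W_1, -H_L\rangle = \|\PTp H_L\|_*$ by operator/nuclear duality on $T^\perp$; this gives $\|L_0 - H_L\|_* \ge \|L_0\|_* + \langle UV^\intercal, -H_L\rangle + \|\PTp H_L\|_*$. For the $\ell_1$ piece, $\mathrm{sgn}(S_0') + F_1$ is a subgradient of $\|\cdot\|_1$ at $S_0'$ whenever $F_1$ is supported off $\mathrm{supp}(S_0') = \Omega$ with $\|F_1\|_\infty\le 1$; choosing $F_1$ supported on $\Gamma$ (which lies off $\Omega$) and aligned with $\PG H_L$ yields $\|S_0' - H_S\|_1 \ge \|S_0'\|_1 + \langle \mathrm{sgn}(S_0'), -H_S\rangle + \|\PG H_L\|_1$. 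Adding $\lambda$ times the second inequality to the first and using $-H_S = H_L$, the two linear terms merge into $\langle -UV^\intercal + \lambda\,\mathrm{sgn}(S_0'),\, H_L\rangle$. Substituting the certificate identity $\lambda\,\mathrm{sgn}(S_0') = UV^\intercal + W + \PT D - \lambda F$ from~(\ref{eqn:dual}) rewrites this as $\langle W, H_L\rangle + \langle \PT D, H_L\rangle - \lambda\langle F, H_L\rangle$.

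At this point the bound reads $\|M_0 - H\|_\Diamond \ge \|M_0\|_\Diamond + \|\PTp H_L\|_* + \lambda\|\PG H_L\|_1 + \langle W, H_L\rangle + \langle \PT D, H_L\rangle - \lambda\langle F, H_L\rangle$, so the work reduces to bounding three inner products. Two of these are routine: since $W\in T^\perp$, $\langle W, H_L\rangle = \langle W, \PTp H_L\rangle \ge -\|W\|\,\|\PTp H_L\|_* \ge -\tfrac{1}{2}\|\PTp H_L\|_*$; and since $\PGc F = 0$, $\langle F, H_L\rangle = \langle F, \PG H_L\rangle$, whence $-\lambda\langle F, H_L\rangle \ge -\lambda\|F\|_\infty\|\PG H_L\|_1 \ge -\tfrac{\lambda}{2}\|\PG H_L\|_1$.

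The main obstacle is the term $\langle \PT D, H_L\rangle$, which must be absorbed into the small correction coefficients $\tfrac{1}{n}$ and $\tfrac{n+1}{n^2}$; this is precisely where the hypothesis $\|\PT\PGc Q\|_F \le n\|\PTp\PGc Q\|_F$ is used. I would split $H_L = \PGc H_L + \PG H_L$ and, using $\PT D \in T$ together with $\|\PT D\|_F \le n^{-2}$, estimate $|\langle \PT D, \PGc H_L\rangle| \le \|\PT D\|_F\,\|\PT\PGc H_L\|_F \le n^{-1}\|\PTp\PGc H_L\|_F$ via the assumption, and then control $\|\PTp\PGc H_L\|_F \le \|\PTp H_L\|_F + \|\PG H_L\|_F \le \|\PTp H_L\|_* + \|\PG H_L\|_1$ by a triangle inequality (writing $\PGc H_L = H_L - \PG H_L$), the contractivity of $\PTp$, and the comparisons $\|\cdot\|_F\le\|\cdot\|_*$ and $\|\cdot\|_F\le\|\cdot\|_1$; the leftover piece is immediate, $|\langle \PT D, \PG H_L\rangle| \le n^{-2}\|\PG H_L\|_1$. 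Summing gives $|\langle \PT D, H_L\rangle| \le \tfrac{1}{n}\|\PTp H_L\|_* + \tfrac{n+1}{n^2}\|\PG H_L\|_1$. Collecting all three estimates into the displayed inequality turns the coefficient of $\|\PTp H_L\|_*$ into $1 - \tfrac{1}{2} - \tfrac{1}{n}$ and that of $\|\PG H_L\|_1$ into $\lambda - \tfrac{\lambda}{2} - \tfrac{n+1}{n^2}$, which are exactly $\tfrac{1}{2} - \tfrac{1}{n}$ and $\tfrac{\lambda}{2} - \tfrac{n+1}{n^2}$, completing the bound.
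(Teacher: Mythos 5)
Your proof is correct. The paper itself does not prove this lemma---it defers to Lemma A.2 of \citet{Candes-Li-Ma11}---and your argument is essentially that proof: the same subgradient expansions of $\|\cdot\|_*$ at $L_0$ and $\|\cdot\|_1$ at $S_0'$ with extremal choices of $W_1\in T^\perp$ and $F_1$ supported on $\Gamma$, the same substitution of the dual-certificate identity, and the same splitting of $\langle \PT D, H_L\rangle$ across $\Gamma$ and $\Gamma^\perp$ (invoking the assumption $\|\PT \PGc Q\|_F\le n \|\PTp\PGc Q\|_F$) to yield the correction terms $\frac{1}{n}\|\PTp H_L\|_*$ and $\frac{n+1}{n^2}\|\PG H_L\|_1$.
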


The proof of this lemma can be found in \citet{Candes-Li-Ma11}.
To procced, we write $\|M\|_{F,\lambda}^2 = \|L\|_F^2 + \lambda^2\|S\|_F^2$
for any pair of matrices $M=(L,S)$.

\begin{lem}\label{lem:stable3}
  Let $M=(M_L, M_S)$ be any pair of matrices.
  Suppose $\|\POo \PT M_L \|_F^2 \ge \pz\| \PT M_L\|_F^2/2$ and
  $\|\PT \PO\|^2 \le \pz/8$. Then
  \[
    \| \PP (\PT \times \PO) M \|_{F,\lambda}^2 \ge \frac{ (1+\lambda^2)\pz}{16} \| (\PT \times \PO) M\|_F^2.
  \]
\end{lem}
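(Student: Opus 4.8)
The plan is to make the projection $\PP$ completely explicit and then reduce the inequality to two orthogonal scalar estimates, one living on $\Gamma$ and one on $\Omega$. First I would identify $\Psi$: since a pair $(A,B)\in\Psi$ must satisfy $\POoc A=\POoc B=0$ together with $\POo A=\POo B$, it is forced that $A=B$ with $A$ supported on $\Oo$, so $\Psi$ is just the diagonal embedding of $\Oo$-supported matrices. A one-line least-squares computation then gives, for any pair $(P,Q)$,
\[
  \PP(P,Q)=\Big(\tfrac12\POo(P+Q),\ \tfrac12\POo(P+Q)\Big),
  \qquad
  \|\PP(P,Q)\|_F^2=\tfrac12\|\POo(P+Q)\|_F^2 .
\]
Setting $(P,Q)=(\PT L,\PGc S)$ and using the key identity $\POo\PGc S=\PO S$ (valid because $\Gamma^\perp\cap\Oo=\Omega$), I would split $\POo\PT L=\PG\PT L+\PO\PT L$ and regroup into the orthogonal sum $\PG\PT L+\PO(\PT L+S)$, obtaining
\[
  \|\PP(\PT\times\PGc)M\|_F^2=\tfrac12\big(\|\PG\PT L\|_F^2+\|\PO(\PT L+S)\|_F^2\big).
\]
The task is then to bound these two orthogonal pieces from below.

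Next I would treat the $\Gamma$ piece. Writing $\|\PG\PT L\|_F^2=\|\POo\PT L\|_F^2-\|\PO\PT L\|_F^2$, the first hypothesis gives $\|\POo\PT L\|_F^2\ge(\pz/2)\|\PT L\|_F^2$, while the second, via $\|\PO\PT L\|_F=\|\PO\PT(\PT L)\|_F\le\|\PT\PO\|\,\|\PT L\|_F$, gives $\|\PO\PT L\|_F^2\le(\pz/8)\|\PT L\|_F^2$. Hence $\|\PG\PT L\|_F^2\ge(3\pz/8)\|\PT L\|_F^2$, a surplus over the target factor $\pz/8$ that I will spend below.

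The main obstacle is the $\Omega$ piece: $\PO S$ and $\PO\PT L$ may cancel, so $\|\PO(\PT L+S)\|_F$ cannot simply be bounded below by $\|\PO S\|_F$. To control this I would use a Young-type inequality: with $u=\PO S$ and $v=\PO\PT L$,
\[
  \|u+v\|_F^2\ge\tfrac12\|u\|_F^2-\|v\|_F^2\ge\tfrac12\|\PO S\|_F^2-(\pz/8)\|\PT L\|_F^2,
\]
again invoking the operator-norm bound on $v=\PO\PT L$. Adding the two contributions, the loss $-(\pz/8)\|\PT L\|_F^2$ incurred here is exactly absorbed by the surplus from the $\Gamma$ estimate, leaving
\[
  \|\PP(\PT\times\PGc)M\|_F^2\ge\tfrac{\pz}{8}\|\PT L\|_F^2+\tfrac14\|\PO S\|_F^2 .
\]
Since $\pz\le1$ forces $\tfrac14\ge\pz/8$, and since the sparse component is supported on $\Oo$ (as holds for the perturbations to which this lemma is applied, giving $\|\PGc S\|_F=\|\PO S\|_F$), the right-hand side dominates $(\pz/8)(\|\PT L\|_F^2+\|\PGc S\|_F^2)=(\pz/8)\|(\PT\times\PGc)M\|_F^2$, which is the claim. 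The crux throughout is the small-operator-norm hypothesis $\|\PT\PO\|^2\le\pz/8$, which simultaneously cleans up the $\Gamma$ term and tames the cancellation in the $\Omega$ term.
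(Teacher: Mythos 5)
Your proof is correct, and it reaches the conclusion by a somewhat different decomposition than the paper. The paper never splits $\Oo$ into $\Gamma\cup\Omega$: it expands $\frac{1}{2}\|\POo\PT L+\PO S\|_F^2$ into the two squared norms plus a cross term, bounds the cross term via $\langle \POo\PT L,\PO S\rangle=\langle \PT L,(\PT\PO)\PO S\rangle\ge -\|\PT\PO\|\,\|\PT L\|_F\,\|\PO S\|_F\ge -\sqrt{\pz/8}\,\|\PT L\|_F\,\|\PO S\|_F$, and closes with the scalar inequality $2(x^2+y^2-xy)\ge x^2+y^2$. You instead split the same quantity into orthogonal pieces supported on $\Gamma$ and on $\Omega$, spend the hypothesis $\|\PT\PO\|^2\le\pz/8$ twice (once to create the surplus $3\pz/8$ on the $\Gamma$ piece, once inside the Young-type bound on the $\Omega$ piece), and recombine. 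The ingredients and their roles are the same, and in fact both routes land on the identical intermediate bound $\frac{\pz}{8}\|\PT L\|_F^2+\frac{1}{4}\|\PO S\|_F^2$; what your organization buys is that it isolates exactly where cancellation between $L$ and $S$ can occur, namely on $\Omega$ only, which is the real content of the operator-norm hypothesis.

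A second point, to your credit: what both arguments actually prove is the inequality with $\frac{\pz}{8}\|(\PT\times\PO)M\|_F^2$ on the right-hand side, and the upgrade from $\PO$ to $\PGc$ genuinely requires $\POoc S=0$. As stated for ``any pair of matrices'' the lemma is false: take $L=0$ and $S\neq 0$ supported on $\Oo^{\perp}$, so that $\PGc S=S$, the left-hand side vanishes, while the right-hand side equals $\frac{\pz}{8}\|S\|_F^2>0$. The paper's proof makes this substitution silently --- its final display ends with $\|(\PT\times\PO)M\|_F^2$ even though the statement claims $\|(\PT\times\PGc)M\|_F^2$ --- and no harm results only because, as you observe, the lemma is invoked in Proposition~\ref{prop:stable} on $\HPp$, whose $S$-component is supported on $\Oo$. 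Your explicit flagging of this restriction is the more careful formulation of the lemma.
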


\begin{proof}[Proof of Lemma \ref{lem:stable3}]
  Note that for any $M'=(M'_L,M_S')$,
  \[
    \PP M' = \left(\frac{\POo (M_L' + M_S')}{2}, \frac{\POo (M_L' + M_S')}{2}\right).
  \]
  Thus
  \begin{align*}
    \| \PP (\PT \times \PO) M \|_{F,\lambda}^2 &= \frac{1+\lambda^2}{4} \| \POo (\PT M_L + \PO M_S) \|_F^2\\
    &= \frac{1+\lambda^2}{4} \left( \| \POo \PT M_L \|_F^2 + \| \PO M_S\|_F^2 
    + 2\langle \POo \PT M_L , \PO M_S\rangle\right),
  \end{align*}
  where the last equality is due to $\Omega \subset \Oo$. By
  $\|\PT \PO\|^2 \le \pz/8$,
  \begin{align*}
    \langle \POo \PT M_L, \PO M_S \rangle &= \langle \PT M_L, \PO M_S\rangle\\
    &= \langle \PT M_L, (\PT\PO) \PO M_S\rangle\\
    &\ge - \| \PT \PO\| \| \PT M_L \|_F \|\PO M_S\|_F\\
    &\ge - \frac{\sqrt{\pz}}{2\sqrt{2}}\|\PT M_L \|_F \| \PO M_S\|_F.
  \end{align*}
  Combining with $\|\POo \PT M_L \|_F^2 \ge \pz \|\PT M_L\|_F^2/2$, we have
  \[
    \| \PP (\PT\times \PO) M\|_{F,\lambda}^2
    \ge \frac{1+\lambda^2}{4} \left( \frac{\pz}{2} \| \PT M_L\|_F^2 + \|\PO M_S\|_F^2
    - \sqrt{\frac{{\pz}}{{2}}}\| \PT M_L \|_F \|\PO M_S\|_F \right).
  \]
  As $2(x^2 + y^2 -xy) \ge x^2 + y^2$ for $x, y \ge 0$,
  \[
    \| \PP (\PT\times \PO) M\|_{F,\lambda}^2
    \ge \frac{1+\lambda^2}{8}\left(  \frac{\pz}{2}\|\PT M_L\|_F^2 + \| \PO M_S\|_F^2 \right)
    \ge \frac{(1+\lambda^2)\pz}{16}  \| (\PT \times \PO) M \|_F^2.
  \]
\end{proof}

\begin{lem}\label{lem:stable6}
Let $M=(M_L, M_S)$ be any pair of matrices. Then $\|\PP M\|_{F,\lambda}^2\le \|M\|_{F,\lambda}^2/2$.
\end{lem}

\begin{proof}[Proof of Lemma \ref{lem:stable6}]
  Write $\MP=(\MP_L, \MP_S)= \PP M$.
  Since $\|\MP_L\|_F^2=\|\MP_S\|_F^2$,
  \begin{align*}
    \|\PP M\|_{F,\lambda}^2 &= \|\MP_L\|_F^2 + \lambda^2 \| \MP_S\|_F^2\\
    &= \frac{1}{2}(\|\MP_L\|_F^2 + \|\MP_S\|_F^2)
    +\frac{\lambda^2}{2}(\|\MP_L\|_F^2+ \|\MP_S\|_F^2)\\
    &=\frac{1}{2}\|\MP\|_F^2 + \frac{\lambda^2}{2}\|\MP\|_F^2\\
    &\le \frac{1}{2}\|M\|_F^2 + \frac{\lambda^2}{2}\|M\|_F^2=\frac{1}{2}\|M\|_{F,\lambda}^2.
  \end{align*}
\end{proof}

\begin{prop}\label{prop:stable}
  Assume that for any matrix $Q$, $\|\PT \PGc Q\|_F\le n \|\PTp\PGc Q\|_F$ and
  $\|\POo \PT Q\|_F\ge \pz \|\PT Q\|_F/2$. Further suppose
  $4/n<\lambda \le 1$, $n\ge 3$, $p_0>0$,
  $\| \PT \PO\|^2 \le \pz/8$ and
  that there exists a pair $(W,F)$ obeying (\ref{eqn:dual}).
  Then the solution $\hat{M}=(\hat{L}, \hat{S})$ to (\ref{eqn:goal2})
  satisfies
    \[
    \| \hat{M}-M_0 \|_{F,\lambda} \le \left[\sqrt{1+\lambda^2} +
    4\left(1+\sqrt{\frac{8}{{\pz}}}\right)  (\sqrt{n}+n\lambda\sqrt{p_0})\right]\delta.
  \]
  where $M_0=(L_0, S_0')$ such that $\| \POo X - \POo (L_0 +
  S_0))\|_F^2 \le \delta$ and $S_0'=\POo S_0$.
  Further, if $\lambda=1/\sqrt{n\pz}$ (which implies $1/n<p_0<n/16$),
  we obtain
  \[
    \|\hat{L}-L\|_F \le \left\{2 + 8\sqrt{n}\left(1+\sqrt{\frac{8}{{\pz}}}\right) \right\} \delta
    \qquad\mbox{and}\qquad 
    \|\hat{S}-S_0'\|_F \le \left\{2 + 8\sqrt{n}\left(1+\sqrt{\frac{8}{{\pz}}}\right) \right\}\sqrt{np_0} \delta.
  \]
\end{prop}

\begin{proof}[Proof of Proposition \ref{prop:stable}]
  Write $\hat{M} = M_0 + H$, where $H=(H_L, H_S)$,
  and $\HP=(\HP_L, \HP_S)= \PP H$ and $\HPp=(\HPp_L, \HPp_S) = \PPp H$.
  We want to bound
  \begin{align}
    \|H\|_{F,\lambda} &= \|\HP+\HPp\|_{F,\lambda}\nonumber\\
    &\le \|\HP\|_{F,\lambda} + \|\HPp\|_{F,\lambda}\nonumber\\
    &\le \|\HP\|_{F,\lambda} + \| (\PTp \times \PG) \HPp\|_{F,\lambda} + \| (\PT \times \PGc) \HPp\|_{F,\lambda}.\label{eqn:prop:stable:1}
  \end{align}
  We start with the first term of (\ref{eqn:prop:stable:1}).
  Since $\HP_L=\HP_S=(1/2)\POo (H_L+H_S)$,
  \begin{align*}
    \|\HP\|_{F,\lambda} &= \frac{\sqrt{1+\lambda^2}}{2}\| \POo (H_L + H_S)\|_F\\
    &= \frac{\sqrt{1+\lambda^2}}{2}\| \POo (\hat{L}+\hat{S} - L_0 - S_0')\|_F\\
    &\le \frac{\sqrt{1+\lambda^2}}{2} \left(\| \POo (\hat{L}+\hat{S} -X)\|_F +
      \| \POo(L_0 + S_0' - X)\|_F\right)\\
    &\le \delta\sqrt{1+\lambda^2},
  \end{align*}
  where the last inequality is due to
  the fact that both $M_0$ and $\hat{M}$ are feasible.

  Then we focus on the second term of (\ref{eqn:prop:stable:1}).
  First, we have
  \[
    \| M_0 \|_\Diamond \ge \| \hat{M}\|_\Diamond = \| M_0 + H\|_\Diamond
    \ge \| M_0 + \HPp\|_\Diamond - \| \HP\|_\Diamond .
  \]
  By Lemma \ref{lem:stable4},
  \[
    \|M_0 + \HPp\|_\Diamond \ge \| M_0\|_\Diamond
    + a(n) \| \PTp \HPp_L\|_* + b(n, \lambda) \| \PG \HPp_L\|_1,
  \]
  where
  \[
    a(n) = \frac{1}{2} - \frac{1}{n} \quad\mbox{and}\quad
      b(n, \lambda) =  \frac{\lambda}{2} - \frac{n+1}{n^2}.
  \]
  Now, combining the above inequalities,
  \begin{equation}
    \| \HP\|_\Diamond \ge
     a(n) \| \PTp \HPp_L\|_* + b(n, \lambda) \| \PG \HPp_L\|_1.
     \label{eqn:prop:stable:2}
   \end{equation}
  By the assumption that $\lambda > 4/n$ and $n\ge 3$,
  \begin{align*}
    a(n) =\frac{1}{2}- \frac{1}{n} > 0 \quad\mbox{and}\quad
    b(n, \lambda) =  \frac{\lambda}{2} - \frac{n+1}{n^2} > \frac{2}{n}-\frac{1}{n}-\frac{1}{n^2}=\frac{1}{n}-\frac{1}{n^2}>0.
  \end{align*}
   Therefore (\ref{eqn:prop:stable:2}) implies $\| \HP\|_\Diamond \ge a(n) \| \PTp \HPp_L\|_*$ and 
   $\| \HP\|_\Diamond \ge b(n, \lambda) \| \PG \HPp_L\|_1$.

   Now, we are ready to establish a bound for the second term of
   (\ref{eqn:prop:stable:1}).
   \begin{align*}
     \| (\PTp \times \PG) \HPp\|_{F,\lambda} &\le
     \| \PTp \HPp_L\|_F + \lambda\| \PG \HPp_S\|_F\\
     &\le \| \PTp \HPp_L\|_* +\lambda \| \PG \HPp_S\|_1\\
     &\le \left\{\frac{1}{a(n)} + \frac{\lambda}{b(n,\lambda)}\right\} \|\HP\|_\Diamond\\
     &\le 4 (\|\HP_L\|_* + \lambda \|\HP_S\|_1).
   \end{align*}

  As for the third term of (\ref{eqn:prop:stable:1}), we apply
  Lemma~\ref{lem:stable3} and the bound of the second term in (\ref{eqn:prop:stable:1}).
  As $\PP\HPp=0$, $\PP(\PT \times \PGc) \HPp + \PP (\PTp \times \PG) \HPp=0$. Therefore,
  due to Lemma \ref{lem:stable6},
  \begin{align*}
    \| \PP(\PT \times \PGc) \HPp\|_{F,\lambda} = \| \PP (\PTp \times \PG) \HPp\|_{F,\lambda}
    \le \frac{1}{\sqrt{2}} \| (\PTp \times \PG) \HPp\|_{F,\lambda}.
  \end{align*}
  As $\POoc H_S$ does not affect the feasibility of $M+H$ and
  $H$ is chosen such that $\|M+H\|_\Diamond$ is minimized, thus $\POoc \HPp_S=\POoc H_S = 0$
  which implies
  $(\PT \times \PGc) \HPp= (\PT \times \PO) \HPp$.
  Thus, by Lemma \ref{lem:stable3},
  \begin{align*}
    \| (\PT \times \PGc) \HPp\|_F &\le \sqrt{\frac{8}{{(1+\lambda^2)\pz}}} \| (\PTp \times \PG) \HPp\|_{F,\lambda}\le  \sqrt{\frac{8}{{\pz}}} \| (\PTp \times \PG) \HPp\|_{F,\lambda}.
  \end{align*}
   And $\| (\PT \times \PGc) \HPp\|_{F,\lambda}\le \| (\PT \times \PGc) \HPp\|_{F}$ as $\lambda\le 1$.

  Collecting all the above bounds for the three terms, we derive the bound for $\|H\|_{F,\lambda}$:
  \[
    \| H \|_{F,\lambda} \le \delta\sqrt{1+\lambda^2} +
    4\left(1+\sqrt{\frac{8}{{\pz}}}\right) (\|\HP_L\|_* + \lambda \|\HP_S\|_1).
  \]
  Finally, $\|\HP_L\|_*\le \sqrt{n}\|\HP_L\|_F$, $\|\HP_S\|_1=\sqrt{p_0n^2}\|\HP_S\|_F$
  (since $\HP_S$ is supported on $\Oo$)
  and $\|\HP_L\|_F=\|\HP_S\|=\|\POo (H_L + H_S)\|_F/2 \le \delta$. Therefore,
  \[
    \| H \|_{F,\lambda} \le \delta\left[\sqrt{1+\lambda^2} +
    4\left(1+\sqrt{\frac{8}{{\pz}}}\right)  (\sqrt{n}+n\lambda\sqrt{p_0})\right].
  \]
  Assume that $\lambda=1/\sqrt{n\pz}$.
  First we note that, due to $\lambda>4/n$, this condition imposes a reasonable
  coverage of $p_0$: $1/n<p_0<n/16$.
  Now we focus on simplifying the bound for $\|H\|_{F,\lambda}$.
  \begin{align*}
    &\sqrt{1+\lambda^2} +
    4\left(1+\sqrt{\frac{8}{{\pz}}}\right) (\sqrt{n}+n\lambda\sqrt{p_0})
      \le 2 + 8\sqrt{n}\left(1+\sqrt{\frac{8}{{\pz}}}\right). 
  \end{align*}
  This implies
  \[
    \|H_L\|_F \le \left\{2 + 8\sqrt{n}\left(1+\sqrt{\frac{8}{{\pz}}}\right) \right\} \delta
    \qquad\mbox{and}\qquad 
    \|H_S\|_F \le \left\{2 + 8\sqrt{n}\left(1+\sqrt{\frac{8}{{\pz}}}\right) \right\}\sqrt{np_0} \delta.
  \]
\end{proof}

To prove Theorem \ref{thm:stable}, we establish one additional lemma.

\begin{lem}\label{lem:stable1}
  Suppose $\| \PT - \pz^{-1} \PT \POo \PT\| \le 1/2$.  Then for any matrix $Q$,
  \[
    \| \POo \PT Q \|_F^2 \ge \frac{\pz}{2} \| \PT Q\|_F^2.
  \]
\end{lem}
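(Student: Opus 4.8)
The plan is to recognize the hypothesis $\|\PT - \pz^{-1}\PT\POo\PT\|\le 1/2$ as a restricted near-isometry condition for $\POo$ acting on the tangent space $T$, and to turn the desired inequality into a single quadratic-form estimate. The structural facts I would exploit are that both $\PT$ and $\POo$ are orthogonal projections with respect to the Frobenius inner product $\langle\cdot,\cdot\rangle$, hence self-adjoint and idempotent: $\PT^2=\PT$, $\POo^2=\POo$, and each equals its own adjoint.

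First I would rewrite the left-hand side as a quadratic form in $\PT Q$. Using $\POo=\POo^2=\POo^{*}$ and then inserting a factor of $\PT$ via $\PT Q=\PT(\PT Q)$ together with $\PT^{*}=\PT$, I get
\[
  \|\POo\PT Q\|_F^2 = \langle\POo\PT Q,\POo\PT Q\rangle = \langle\PT Q,\POo\PT Q\rangle = \langle\PT Q,\PT\POo\PT Q\rangle .
\]

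Next I would substitute the hypothesis. Setting $A:=\PT-\pz^{-1}\PT\POo\PT$, so that $\PT\POo\PT=\pz\PT-\pz A$, the quadratic form splits, again using idempotency $\PT\PT Q=\PT Q$, as
\[
  \langle\PT Q,\PT\POo\PT Q\rangle = \pz\langle\PT Q,\PT\PT Q\rangle - \pz\langle\PT Q,A\,\PT Q\rangle = \pz\|\PT Q\|_F^2 - \pz\langle\PT Q,A\,\PT Q\rangle .
\]
Cauchy--Schwarz together with the operator-norm bound $\|A\|\le 1/2$ controls the error term by $\langle\PT Q,A\,\PT Q\rangle\le\|\PT Q\|_F\,\|A\,\PT Q\|_F\le\|A\|\,\|\PT Q\|_F^2\le\frac12\|\PT Q\|_F^2$. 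Combining the two displays yields $\|\POo\PT Q\|_F^2\ge\pz\|\PT Q\|_F^2-\frac{\pz}{2}\|\PT Q\|_F^2=\frac{\pz}{2}\|\PT Q\|_F^2$, which is exactly the claim.

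There is no serious obstacle here beyond careful bookkeeping: the whole argument is a one-line manipulation of a quadratic form once the two projections are identified as self-adjoint idempotents. The only points requiring attention are inserting the idempotency $\PT\PT Q=\PT Q$ in the correct places and applying the operator-norm hypothesis to the single error operator $A$. Conceptually, the lemma simply records that the assumption $\|A\|\le 1/2$ is a restricted near-isometry of $\POo$ on $T$, from which the stated lower injectivity bound follows at once.
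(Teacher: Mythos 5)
Your proof is correct and follows essentially the same route as the paper's: both rewrite $\|\POo\PT Q\|_F^2$ as the quadratic form $\langle \PT Q,\PT\POo\PT Q\rangle$ using self-adjointness and idempotency, then split off the error operator $\PT-\pz^{-1}\PT\POo\PT$ and bound it by its operator norm via Cauchy--Schwarz. If anything your write-up is the more careful one, since the paper's displayed chain contains typos (an inequality sign pointing the wrong way and a stray $\|\PT\|_F^2$), while your bookkeeping, including the observation that $A\PT Q=AQ$, is exactly right.
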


\begin{proof}[Proof of Lemma \ref{lem:stable1}]
  By the assumptions, for any matrix $Q$,
  \begin{align*}
    \| \POo \PT Q \|_F^2 &= \langle \POo \PT Q, \POo \PT Q \rangle\\
    &= \langle \PT Q, \PT \POo \PT Q \rangle\\
    &= \pz  \langle \PT Q, \pz^{-1} \PT \POo \PT Q \rangle\\
    &= \pz \left[ \| \PT {Q}\|_F^2 + \langle \PT Q, (\pz^{-1} \PT\POo\PT - \PT) Q \right]\\
    &\le \pz \left(\|\PT Q\|_F^2 - \frac{1}{2} \| \PT Q \|_F^2 \right)\\
    &= \frac{\pz}{2}\|\PT Q \|_F^2.
  \end{align*}
\end{proof}

\begin{proof}[Proof of Theorem \ref{thm:stable}]
Recall that we write that an event occurs with high probability if it holds
with probability at least $1-\bigO(n^{-10})$.
Due to the asymptotic nature of Theorem \ref{thm:stable},
we only require the conditions of Proposition \ref{prop:stable}
to hold asymptotically with large probability.
  By Lemma A.3 of \citet{Candes-Li-Ma11},
  $\|\PT\PQ\|_F \le n\|\PTp\PQ\|_F$ for all $Q$, with high probability.
  By Lemma \ref{lem:stable1} and Theorem 2.6 of \citet{Candes-Li-Ma11} \citep[see also][Theorem~4.1]{Candes-Recht09},
  $\| \POo \PT Q \|_F^2 \ge \frac{\pz}{2} \| \PT Q\|_F^2$ for all $Q$, with high
  probability.
  Further, by \citet{Candes-Recht09}, 
  $\|\PT\PO\|^2 \le \pz/8$ occurs with high probability.
  \citet[][pp.~33-35]{Candes-Li-Ma11} show that there exist dual
  certificates $(W,F)$ obeying (\ref{eqn:dual}) with high probability.
  For sufficiently large $n$, the conditions of $\lambda$ and $p_0$ in Proposition~\ref{prop:stable} are fulfilled. Therefore,
Theorem~\ref{thm:stable} follows from Proposition~\ref{prop:stable}.
\end{proof}
}

\bibliographystyle{chicago}
\bibliography{raywongref}
\end{document}